\title{Hierarchies of Relaxations for Online Prediction Problems with Evolving Constraints}
\author{Alexander Rakhlin \\ University of Pennsylvania \and Karthik Sridharan \\ Cornell University}
\newtheorem{theorem}{Theorem}
\newtheorem{lemma}[theorem]{Lemma}
\newtheorem{corollary}[theorem]{Corollary}
\newtheorem{proposition}[theorem]{Proposition}
\newtheorem{example}{Example}
\theoremstyle{definition}
\newtheorem{definition}{Definition}
\newcommand{\Las}{\mathcal{L}\mathrm{as}}
\newcommand{\mbb}[1]{\mathbb{#1}}
\newcommand{\mbf}[1]{\mathbf{#1}}
\newcommand{\mc}[1]{\mathcal{#1}}
\newcommand{\mrm}[1]{\mathrm{#1}}
\newcommand{\V}{\mc{V}}
\newcommand{\gap}{\mrm{gap}}
\newcommand{\pred}{\widehat{{y}}}
\newcommand{\norm}[1]{\left\|#1\right\|}
\newcommand{\argmin}[1]{\underset{#1}{\mrm{argmin}} \ }
\newcommand{\reals}{\mathbb{R}}
\newcommand{\E}[1]{\mathbb{E}\left[ #1 \right]} 
\newcommand{\En}{\mathbb{E}}  
\newcommand{\Es}[2]{\mathbb{E}_{#1}\left[ #2 \right]} 
\newcommand{\inner}[1]{\left\langle #1 \right\rangle}
\newcommand{\ip}[2]{\left<#1,#2\right>}
\newcommand{\ind}[1]{{\bf 1}\left\{#1\right\}}
\newcommand{\tr}{\ensuremath{{\scriptscriptstyle\mathsf{T}}}}
\newcommand{\bp}{\boldsymbol{p}}
\newcommand{\bepsilon}{\boldsymbol{\epsilon}}
\newcommand\y{\mathbf{y}}
\newcommand\cI{\mathcal{I}}
\newcommand\X{\mathcal{X}}
\newcommand\Y{\mathcal{Y}}
\newcommand\F{\mathcal{F}}
\newcommand\G{\mathcal{G}}
\newcommand\M{\mathcal{M}}
\newcommand\Reg{\mbf{Reg}}
\newcommand\loss{\boldsymbol{\ell}}
\newcommand{\Relax}[3]{\mbf{Rel}_{#1}\left(#2 ~\middle| #3 \right)}
\newcommand{\Fclass}[2]{\F_{#1}\boldsymbol{[}#2\boldsymbol{]}}
\newcommand{\constr}{{\mathscr{C}}}
\def\deq{\triangleq}
\newcommand{\frameit}[1]{
    \begin{center}
    \framebox[1\columnwidth][c]{
        \begin{minipage}{.99\columnwidth}
        #1
		\vspace{-0mm}
        \end{minipage}
    }
    \end{center}
}
\begin{document}

\maketitle

\begin{abstract}
		We study online prediction where regret of the algorithm is measured against a benchmark defined via evolving constraints. This framework captures online prediction on graphs, as well as other prediction problems with combinatorial structure. A key aspect here is that finding the optimal benchmark predictor (even in hindsight, given all the data) might be computationally hard due to the combinatorial nature of the constraints. Despite this, we provide polynomial-time \emph{prediction} algorithms that achieve low regret against combinatorial benchmark sets. We do so by building improper learning algorithms based on two ideas that work  together. The first is to alleviate part of the computational burden through random playout, and the second is to employ Lasserre semidefinite hierarchies to approximate the resulting integer program. Interestingly, for our prediction algorithms, we only need to compute the values of the semidefinite programs and not the rounded solutions. However, the integrality gap for Lasserre hierarchy \emph{does} enter the generic regret bound in terms of Rademacher complexity of the benchmark set. This establishes a trade-off between the computation time and the regret bound of the algorithm.
\end{abstract}


\section{Introduction}

To motivate the general setting of the paper, let us start with an example. Consider the problem of node label prediction in an evolving social network. At each round, a new user joins the network and makes connections to some existing users. The observable part of a user's type is represented by a covariate vector (or, \emph{side information}) that may consist of gender, age, education level, and other revealed  characteristics. Suppose we are tasked with developing a system that predicts a ``label'' for the user, in a possible set of outcomes. For instance, our goal might be to conduct a successful marketing campaign; here, the unseen labels could stand for the type of product the user will buy. Having made the prediction, we observe the actual behavior of the person (such as a purchase) and suffer a cost if the prediction was wrong. 

We would like to devise a framework for developing prediction algorithms for this problem. Several aspects require careful consideration. First, how do we phrase the goal of the forecaster? Second, how do we model the evolution of the graph, arrival of users, and users' covariate vectors? Third, how can we leverage global information dispersed in the network in order to make good predictions on the individual level? Last but not least, how do we develop computationally feasible prediction methods? 

To make matters concrete, consider an example where at each time step $t$ a new user   joins the network, and the links (edges) to other users are revealed along with side information $x_t$ about the user. We may think of the weights $W_{ij}\in[-1,1]$ as the strength of similarity (dissimilarity) between users $i$ and $j$. This number is only known if $i,j\leq t$. The system makes a binary prediction $\pred_t$, and the actual label $y_t$ of the user is subsequently revealed. For developing such a prediction system, the practitioner would need to incorporate prior knowledge about the problem. For instance, it might be reasonable to assume that at the end of $V$ rounds, the nodes of the graph will be roughly clustered in terms of their labels, with within-community links being mostly positive and across-community links being mostly negative. In addition to this adherence of labels to the graph structure, we also encode prior information through a function class $\F$ of mappings from side-information to labels. For instance, in binary classification it might be reasonable to suspect a linear separation between the two classes in terms $\text{sign}(w\cdot x_t)$ for some $w$. Unfortunately, the connectivity, side information, and the labels are only partially known until the end of $V$ rounds. Nevertheless, we set the goal as that of predicting as well as if this information were available: the performance is measured by the regret
$$\sum_{t=1}^V \ind{\pred_t\neq y_t} - \inf_{f\in\F[\text{data}]}\sum_{t=1}^V \ind{f(x_t)\neq y_t},$$
where $\F[\text{data}]\subseteq\F$ is only known at the end of $V$ rounds (precise definition given in the next section). $\F[\text{data}]$ is a data-dependent set of labelings that (we hope) models well the prediction problem at hand (see \citep{cesa2013random} and references therein for related graph prediction problems). 

Given the interpretation that positive $W_{ij}$'s encode similarity and negative $W_{ij}$'s encode dissimilarity, it is natural to let $\F$ be a set of labelings such that the number of disagreements at endpoints is minimized for edges with positive weights and maximized for edges with negative weight. This smoothness of $f\in\F$ with respect to the graph can be encoded by the graph Laplacian $L$, and one can   use $\F=\{f\in\{\pm1\}^V: f^\tr L f\leq K\}$, for some parameter $K>0$  \citep{RakSri14chervonenkis}. The authors of the latter paper proposed a straightforward relaxation to obtain a computationally feasible method, at the expense of having a larger regret bound. This is a starting point for the present paper.

We depart from the usual regret minimization framework in several ways. First, instead of restricting the set of possible labelings based solely on the graph structure and edge weights, we model the set $\F$ through the number of satisfied constraints. To this extent, a graph structure is just a particular set of constraints that involve \emph{pairs} of nodes (which we shall interchangeably call ``items'' or ``individuals''). A more general constraint might involve groups of individuals, and this gives greater flexibility in modeling the overall interaction between the nodes. Formally, a constraint is an arbitrary binary or real-valued function from assignments of labels for a subset of nodes to $\reals_{\geq 0}$.  Within theoretical computer science, constraint satisfaction problems (CSPs) are a natural umbrella for such combinatorial problems as {\sf Max Cut}, {\sf Unique Games}, and {\sf Max $k$-SAT}. Furthermore, under the Unique Games Conjecture, semidefinite relaxations are providing an optimal approximation ratio for every CSP \citep{raghavendra2008optimal,raghavendra2009round}. One of the goals of his paper is to apply semidefinite relaxation techniques to the problem of online prediction with combinatorial constraints.

The second way in which we depart from the traditional work on online learning is in allowing constraints to be revealed in an online manner. For the example of a graph-based constraints, this means that the graph can be revealed to the forecaster sequentially. Moreover, we can think of the graph as \emph{evolving} in time since identities of the nodes have little significance, except for being arguments to constraints. We assume that the probability distribution that governs this evolution is known to the forecaster. As a particular case, the distribution may put all the mass on the revelation of all the constraints at the first round, in which case the constraints (or, the graph) are ``known ahead of time.'' More generally, one may take graph evolution models studied in probability theory and in social networks research, and use these for the prediction problem. In addition to the evolution of constraints, we allow the forecaster to observe side information about the new node. This side information is, once again, stochastic and follows a distribution jointly with constraints and node identities. 

While the constraints and side information are stochastic, the label is chosen in an adversarial way. We have in mind the situation where we can model the network structure and the distribution of people types, but the label (or, action) of the person is not easily modeled. Instead, this behavior can be best understood through \emph{global information} within the network, not the local information. Such a global coherence of labels and the constraints is modeled through the comparator class $\F$.

It would appear that the overall framework involving constraints, side information, and adversarially chosen labels cannot yield computationally tractable algorithms. Yet we show that by moving to improper prediction algorithms one can develop computationally efficient methods for the problem with only slight worsening of the regret guarantees. As a first step towards developing efficient methods, we show that the knowledge of the overall distribution governing the presentation of constraints and the side information allows us to define a randomized method with a provable guarantee on prediction error. We analyze ``random playout,'' a method that simulates future constraints and side information and uses these hallucinated values in place of missing information. We show that such an algorithm (which arises from the relaxation framework in \citep{rakhlin2012relax}) has regret that is bounded by classical Rademacher complexity of $\F$ given the constraints and side information. 

The last missing piece in this story is how to calculate the next prediction given the random playout. Here, we show that the forecaster needs to compute a value with conditional Rademacher complexity as part of the objective. In general, the computation of Rademacher complexity is not a feasible task for the types of combinatorial constraints we have in mind. However, the online relaxation framework suggests that we may take a superset of $\F$ (given the constraints and side information) and suffer regret of Rademacher complexity of this larger set. We propose to use semidefinite hierarchies for this task. In particular, we define Lasserre hierarchy \citep{lasserre2001global,parrilo2003semidefinite} to obtain polynomial-time prediction methods with a ``knob'' (level of the hierarchy) that trades off computational time and prediction performance as measured by the regret. 

In this paper, two distinct uses of the word ``relaxation'' come together. \emph{Online relaxations} are upper bounds on the minimax value of the multistage prediction problem \citep{rakhlin2012relax}. One of a number of approaches for obtaining online relaxations is to increase the set of benchmark solutions. The latter is a relaxation in the sense of optimization, as we show in the paper. Indeed, in this case, online relaxations and optimization relaxations are put on the same footing, and any distinction between the two should be clear from the context.

We use semidefinite relaxations in a somewhat unconventional way because the end goal is the problem of \emph{prediction}. The online relaxation requires us to compute the \emph{value} of the relaxed objective rather than the integer solution. Sidestepping the need to round the solution is a nice feature of ``improper'' prediction methods.  The integrality gap still comes into the picture, as it effectively quantifies the increase of Rademacher complexity for the larger set. Yet, the regret bound only requires \emph{existence} of a rounding procedure with a given guarantee and not its implementation. Crucially, the multiplicative increase due to the integrality gap is a constant that enters the regret bound only, leaving the constant in front of the comparator (OPT) to be one! The way in which the power of semidefinite relaxations fuses with the power of online relaxations is rather fortuitous.

The statements proved in this paper have an interesting ``modularity'' property. As soon as one finds a rounding procedure with a smaller integrality gap, this gap can be immediately inserted in the regret upper bound of our method. The prediction algorithm itself does not change, as it does not need to round the solution. Further, since Lasserre hierarchies we are employing are known to be tighter than LP-based and other hierarchies, the integrality gap can be proved for these weaker approximation methods.

We remark that it has been noted in the literature by various authors that the problem of prediction can be solved in situations when the offline solution is NP-hard (see e.g. \citep{HazKalSha12,christiano2014online,abernethy2010can}). Our work can be seen as formally extending this statement to approximation schemes, with an additional knob for the computation-prediction tradeoff. We also remark that ideas similar in spirit have been proposed in \citep{chandrasekaran2012convex,chandrasekaran2013computational}, among others, in the statistical (rather than online) setting. In particular, the recent paper of \cite{barak2015tensor} gives very strong guarantees for learning third-order tensors using the $6$th level of the sum-of-squares hierarchy. The authors compute a tight bound on the Rademacher complexity of the relaxed norm.

In summary, our contribution involves a framework for online prediction of labels for individuals that appear in a streaming fashion, with side information about individuals and constraints being also revealed in an online manner. The labels themselves can be adversarially chosen, while we assume that the stochastic model of the constraints and side information is known a priori. We propose a general method that is based on random playout, and further propose a semidefinite relaxation for the resulting CSP-like problem. We prove several regret bounds for the prediction method in terms of integrality gaps. The method allows for a trade-off between computation time and performance guarantee.

This paper is organized as follows. After describing the setting in the next section, we present in Section~\ref{sec:online_relax} the formalism of online relaxations and state a generic random-playout algorithm with a regret guarantee in terms of the expected relaxation. In Section~\ref{sec:rad_relax} we show that the relaxation based on classical Rademacher averages is ``admissible'', and we state the computationally-difficult problem. In Section~\ref{sec:sdp} we relax the problem in the SDP language of Lasserre hierarchy. Section~\ref{sec:existence_rounding} makes the connection between the integrality gap and the regret bound of the $r$-th level in the hierarchy. The main result here is Theorem~\ref{thm:mainround} which gives a regret bound in terms of the Rademacher complexity and the integrality gap. We turn to an alternative ``Lagrangian'' form of the optimization problem in Section~\ref{sec:regreg} and prove a regret bound for the $r$-th level of this form of relaxation (Theorem~\ref{thm:reg_version}). Several examples are discussed in Section~\ref{sec:examples}, and the paper is concluded with a lower bound in Section~\ref{sec:lower} which shows near-optimality of our methods in terms of prediction performance.

\paragraph{Notation} We use the following shorthand notation: let $[n]\deq \{1,\ldots,n\}$, $a_{1:t} \deq (a_1,\ldots,a_t)$, $(a,b)_{1:t} = (a_1,b_1,\ldots,a_t,b_t)$. We denote by $\Delta(A)$ the set of distributions on the set $A$.

\section{Setting}
\label{sec:setting}
On each round $t=1,\ldots,V$, the forecaster observes a new item along with side information $x_t \in \X_t \subseteq \X$ and a set $\constr_t$ of constraints. 
The forecaster then makes a prediction $\pred_t\in\{1,\ldots,\kappa\}\deq[\kappa]$ and observes the label $y_t\in[\kappa]$. The side information set $\X_t$ may be time-varying, but is known to the forecaster. Each constraint $c \in \constr_t$ is represented by a pair $(S_c, R_c)$ where $S_c  \subseteq \V$ and $R_c : [\kappa]^{S_c} \mapsto \reals_{\geq 0}$. For an assignment $g\in[\kappa]^V$, we write $c(g)$ or $R_c(g)$ for the value of $R_c$ on $g(S_c)$. To lighten the notation, let us introduce a shorthand $\cI_t = (\constr_t,x_t)$ for the associated constraints and the side information for the item.

\begin{example} 
	\label{ex:1}
	Let $\kappa=2$ and let $g\in\{1,2\}^V$ be an assignment of binary labels to vertices of an unweighted graph $G=(\V,E)$. Define a constraint $c$ for each edge $(u,v)\in E$ by taking $S_c=(u,v)$ and $R_c(g_u,g_v)=\ind{g_u \neq g_v}$. Any labeling $g$ defines a partition of $G$, and the size of the cut is precisely $\sum_{c} c(g)$.
\end{example}

Let $\F$ be a class of functions $\X\to [\kappa]$. Each $f\in\F$ gives rise to a vector $(f(x_1),\ldots,f(x_V))$ of labelings of the items. Given $x_1,\ldots,x_V$, each $f\in\F$ induces an assignment vector $[f(x_{j})]_{j=1}^V \in [\kappa]^V$, and now $c([f(x_{j})]_{j=1}^V)$ represents the value of the constraint $c$ on this assignment.

Let $\cup \constr_t=\cup_{t=1}^V \constr_t$ denote the union of all the constraint sets. Given this union, as well as $x_{1:V}$, we define the subset of those functions that do not violate more than $K$ constraints as
	\begin{align}
		\label{eq:def_class_unknown}
		\Fclass{K}{\cI_{1:V}} = \left\{f\in \F : \sum_{c\in \cup \constr_t} c\left([f(x_{1}) , \ldots, f(x_V)]\right) \leq K \right\}
	\end{align}
	for some given $K\geq 0$.
	
\begin{example}
	\label{ex:2}
	Continuing with Example~\ref{ex:1}, let $\F=\{f(x)=\ind{\inner{w,x}>\gamma}+1: w\in \reals^d\}$. The set in \eqref{eq:def_class_unknown} is then the set of homogenous hyperplanes that classify the vertices of the graph with a margin $\gamma$ in such a way that the cut is at most of size $K$.
\end{example}
	
Let $\loss(\pred_t,y_t) = \ind{\pred_t\neq y_t}$ be the indicator loss function. The goal of the forecaster is phrased as minimization of \emph{regret} 
\begin{align}
	\label{eq:regret_def}
	\Reg = \sum_{t=1}^V \loss(\pred_t,y_t) - \inf_{f \in \Fclass{K}{\cI_{1:V}}} \sum_{t=1}^V \loss(f(x_t),y_t)
\end{align}
with respect to the (data-dependent) subset of $\F$. This definition forces the forecaster to perform nearly as well as the benchmark that satisfies the constraints up to a certain threshold. 

We remark that the class $\F$ is ``pruned'' as more information about the constraints arrives over time. This pruning in effect captures the global information in the network, which requires adherence of labelings (given locally by values of $f$ on the side information) to the global structure of constraints. It is important to recognize that the forecaster faces a difficulty: the ``pruned'' set \eqref{eq:def_class_unknown} of comparators can only be calculated in hindsight. 

	We assume that the constraints and side information are drawn from a distribution known to the forecaster. That is, given $\cI_{1:t-1}$, we assume that the forecaster is able to draw samples from the conditional distributions 
	\begin{align}
		\label{eq:sampling_distributions}
		\bp(\constr_t, x_t| \cI_{1:t-1}).
	\end{align}

	
	\begin{example}[Preferential Attachment]
	In the preferential attachment model, the set $\constr_t$ of constraints corresponds to a set of new edges connected to previously revealed nodes. The edges are drawn according to the node degree given by the set of edges $\constr_{1:t-1}$. In this example, the distribution does not depend on side-information.
	\end{example}
	
	\begin{example}[Geometric Random Graphs]
	We may allow $x_t$'s to be drawn from some fixed distribution that does not depend on the constraints. In turn, the constraints can be formed according to the side information. One example is a geometric random graph, where pairwise constraints (graph edges) are formed according to distances from the new random point which may be given by the distance between the side information vectors. It is known that such graphs have better spectral properties \citep{barak2011subsampling}. The result in this paper indeed employ an average (rather than the worst-case) integrality gap and can take advantage of ``nice'' graphs. 
	\end{example}
	
	\begin{example}[Unlabeled Data]
		Rather than assuming the knowledge of the distribution of $x_t$'s, the random play-out algorithm introduced in the paper may tap into a pool of unlabeled data.
	\end{example}
		
	Other examples of distributions include a variant of the stochastic block model (SBM). This generative process provides the simplest model of group formation (though we remark that we are not aiming to recovery a hidden labeling, which is the focus of much research on SBM).

%

	The upper bounds on regret obtained in this paper will also hold for an intermediate time horizon $n\leq V$. This ``anytime'' property follows from the fact that constraints are only added, and not deleted. If one is only concerned with regret at time $V$, the deletion is easy to incorporate in the model.

Finally, let us mention that much of prior literature on online prediction on graphs requires the knowledge of the graph from the beginning. When the order in which nodes are presented is given to us in advance the problem is readily modeled by our setting via $\X_t=\{t\}$. We then write $f(x_t) = f(t)$, precisely the notation for a static expert \citep{PLG}. On the other hand, the case when nodes are presented to us in adversarial fashion is not directly modeled by the presented setting. However, the algorithms presented here can be easily extended to such a scenario. Indeed, at every round $t$, we simply pick some prefixed order for remaining unseen nodes and make predictions assuming this is the order in which nodes will be presented. On similar lines as the inductive proof in \cite{cesa2011efficient}, we can show that the algorithm enjoys the same regret against an adversarial ordering of nodes as the algorithm would for the case when the order is known in advance.

In summary, we presented a flexible problem definition that models the arrival of items and the evolution of constraints. The model encapsulates local information about the items. The goal of the forecaster is phrased as a \emph{global} measure of coherence given all the information at the end of the day. The rest of the paper is focused on exhibiting randomized methods that provably minimize regret in this general framework. We also focus on the computational issues associated with making predictions.


\section{Online Relaxations}
\label{sec:online_relax}

The idea of online relaxations was studied in \citep{rakhlin2012relax} as a generic recipe for deriving prediction algorithms. The basic technique for our context is as follows. Consider for a moment the problem that does not involve constraints, and suppose $x_1,\ldots,x_V$ are provided to the forecaster ahead of time. At time $t$, the forecaster predicts $\pred_t\in\Y$ and observes $y_t\in\Y$. Furthermore, suppose the comparator set $\G$ of functions $\X\to\Y$ in the regret definition is fixed. Given a loss function $\loss: \Y \times \Y \mapsto \reals$, an online relaxation $\mbf{Rel}$ is a sequence of functions that satisfies two conditions. First is the dominance condition: for any sequence of instances $x_{1:V}$ and $y_{1:V}$,
\begin{align}\label{eq:initial}
 \Relax{}{\G}{y_{1:V}} \ge - \inf_{f \in \G} \sum_{t=1}^V \loss(f(x_t),y_t).
\end{align}
Second is the recursive condition: for any $t\in[V]$,
\begin{align}\label{eq:admissibility}
\inf_{q_t \in \Delta(\Y)} \sup_{y_t \in \Y}\left\{\Es{\pred_t \sim q_t}{\loss(\pred_t,y_t)} + \Relax{}{\G}{y_{1:t}} \right\} \le \Relax{}{\G}{y_{1:t-1}}.
\end{align}
A relaxation that satisfies these conditions is termed \emph{admissible}. Given a relaxation $\mathbf{Rel}$ for a class $\G$, define an online learning algorithm which at time $t$, given instances $y_{1:t-1}$ and $x_{1:V}$, makes the random prediction $\pred_t$ by drawing from the distribution $q_t \in \Delta(\Y)$ either given by
$$
q_t = \argmin{q \in \Delta(\Y)} \sup_{y_t \in \Y}\left\{\Es{\pred_t \sim q}{ \loss(\pred_t,y_t)} + \Relax{}{\G}{y_{1:t}} \right\},
$$
or by any other choice that ensures admissibility of the relaxation. It can be easily shown that  regret of such a strategy is upper bounded (in expectation and with high probability) by $\Es{}{\Relax{}{\G}{\emptyset}}$.

We now turn to the case of side-information and constraints being revealed to the forecaster sequentially. We would like to ``lift'' the admissibility technique to this situation. To start, assume that we have a relaxation that is admissible for any class $\G=\Fclass{K}{\cI_{1:V}}$. We propose the following simple randomized strategy. 

\frameit{
	At time $t$, given $\cI_{1:t} = (\constr_s,x_s)_{s=1}^t$,  draw  $\cI_{t+1:V} = (\constr,x)_{t+1:V}$ from the known distribution $\bp$. Pick distribution $q_t$ over $\Y$ as follows
	\begin{align}
		\label{eq:relalgo_stoch}
		\widehat{q}_t(\cI_{t+1:V}) = \argmin{q \in \Delta(\Y)} \sup_{y_t \in \Y}\left\{ \Es{\pred_t \sim q}{\loss(\pred_t,y_t)} + \Relax{}{\Fclass{K}{\cI_{1:V}}}{y_{1:t}}\right\} 
	\end{align}
	and make a randomized prediction according to $\widehat{q}_t(\cI_{t+1:V})$.
}

As mentioned in the introduction, the above randomized method is of a ``random playout'' style. The forecaster simulates future draws to solve the (otherwise difficult) problem in expectation. The next lemma guarantees a bound on the expected regret in terms of expected Rademacher complexity of the data-dependent class. The upper bound behaves as if the forecaster were able to integrate over the complete distribution $\bp$ on each round, despite the fact that the method only draws one sample.

\begin{lemma}\label{lem:mainrel}
	Suppose $\mathbf{Rel}$ is an admissible relaxation for any $\Fclass{K}{\cI_{1:V}}$. Then the randomized algorithm given in \eqref{eq:relalgo_stoch} enjoys the performance guarantee
	\begin{align*}
	 \E{\Reg} \le \Es{(\constr,x)_{1:V}}{\Relax{}{\Fclass{K}{\cI_{1:V}}}{\emptyset}}
	\end{align*}
\end{lemma}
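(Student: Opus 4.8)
The plan is to run the standard admissibility telescoping of \citep{rakhlin2012relax} against the \emph{true}, end-of-game comparator class $\Fclass{K}{\cI_{1:V}}$, with one extra distributional ingredient that handles the random playout: conditionally on the revealed history $\cI_{1:t}$, the playout $\cI_{t+1:V}$ drawn by the algorithm at round $t$ and the \emph{actual} future stream both have law $\bp(\cdot\mid\cI_{1:t})$, and the adversary --- choosing $y_t$ from the pre-playout history --- is independent of each. This is what lets us pretend that the hallucinated stream \emph{is} the real future, so that the recursive condition \eqref{eq:admissibility} can be invoked for the (realized) class $\Fclass{K}{\cI_{1:V}}$.

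\emph{Step 1: remove the comparator and reduce to a per-round claim.} Apply the dominance condition \eqref{eq:initial} with $\G=\Fclass{K}{\cI_{1:V}}$ (legitimate: at the end of the game this is just a fixed class) to get $-\inf_{f\in\Fclass{K}{\cI_{1:V}}}\sum_{t=1}^V\loss(f(x_t),y_t)\le\Relax{}{\Fclass{K}{\cI_{1:V}}}{y_{1:V}}$, and take expectations over $(\constr,x)_{1:V}\sim\bp$, the adversary, and the algorithm's internal randomness:
\[
\E{\Reg}\ \le\ \E{\sum_{t=1}^V\loss(\pred_t,y_t)}+\E{\Relax{}{\Fclass{K}{\cI_{1:V}}}{y_{1:V}}} .
\]
Since $\Relax{}{\Fclass{K}{\cI_{1:V}}}{\emptyset}-\Relax{}{\Fclass{K}{\cI_{1:V}}}{y_{1:V}}$ telescopes as $\sum_{t=1}^V\bigl(\Relax{}{\Fclass{K}{\cI_{1:V}}}{y_{1:t-1}}-\Relax{}{\Fclass{K}{\cI_{1:V}}}{y_{1:t}}\bigr)$, it is enough to establish, for each $t$,
\[
\E{\loss(\pred_t,y_t)}\ \le\ \E{\Relax{}{\Fclass{K}{\cI_{1:V}}}{y_{1:t-1}}}-\E{\Relax{}{\Fclass{K}{\cI_{1:V}}}{y_{1:t}}} .
\]

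\emph{Step 2: the per-round bound.} Fix $t$ and condition on the pre-playout history $\cH_t$ (generated by $\cI_{1:t}$, $y_{1:t-1}$, $\pred_{1:t-1}$, and the earlier playouts). Given $\cH_t$, the label $y_t$ is fixed (aside from its own exogenous randomness, which is harmless), while both the round-$t$ playout and the genuine continuation $\cI_{t+1:V}$ are independent draws from $\bp(\cdot\mid\cI_{1:t})$; since $y_t$ depends on neither, I may identify the playout with the true future, so that $\widehat{q}_t$ is evaluated at the genuine $\cI_{t+1:V}$ and $\cI_{1:V}=(\cI_{1:t},\cI_{t+1:V})$ is the true sequence. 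For each realization of $\cI_{t+1:V}$, the recursive condition \eqref{eq:admissibility} for the class $\Fclass{K}{\cI_{1:V}}$ --- here we use admissibility for \emph{every} such class --- together with the fact that $\widehat{q}_t(\cI_{t+1:V})$ is the corresponding minimizer in \eqref{eq:relalgo_stoch} yields, for any $y\in\Y$ and in particular for the realized $y_t$,
\[
\Es{\pred_t\sim\widehat{q}_t(\cI_{t+1:V})}{\loss(\pred_t,y_t)}\ \le\ \Relax{}{\Fclass{K}{\cI_{1:V}}}{y_{1:t-1}}-\Relax{}{\Fclass{K}{\cI_{1:V}}}{y_{1:t}} .
\]
Integrating over $\cI_{t+1:V}\sim\bp(\cdot\mid\cI_{1:t})$ --- the left side becomes $\E{\loss(\pred_t,y_t)\mid\cH_t}$ by the identification, while the right side keeps the true $\cI_{1:V}$ since the genuine continuation has the same conditional law and $y_{1:t-1},y_t$ are $\cH_t$-measurable --- and then taking the outer expectation over $\cH_t$ gives exactly the per-round claim of Step 1; summing over $t$ and adding back $\E{\Relax{}{\Fclass{K}{\cI_{1:V}}}{y_{1:V}}}$ recovers $\E{\Reg}\le\E{\Relax{}{\Fclass{K}{\cI_{1:V}}}{\emptyset}}$.

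\emph{Main obstacle.} The only delicate step is the ``identification'' in Step 2: one must verify, in terms of the filtration generated by the true stream and the successive playouts, that $y_t$ is independent of both the round-$t$ playout and the true continuation given $\cI_{1:t}$ --- i.e.\ that the adversary is oblivious to the playout randomness --- so that swapping the hallucinated constraints for the real ones leaves every expectation unchanged. Once this is in place, the remaining ingredients (one application of \eqref{eq:admissibility} per round, plus \eqref{eq:initial} and telescoping) are routine.
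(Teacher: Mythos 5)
Your proof is correct and follows essentially the same route as the paper's: dominance to remove the comparator, identification of the round-$t$ playout with the true continuation (same conditional law, both unseen by the adversary when it fixes $y_t$), pointwise application of admissibility for each realization of the completed stream, and integration. The only difference is cosmetic — you telescope expectations forward against a history-measurable adversary, whereas the paper runs a backward induction keeping $\sup_{y_t}$ explicit and moving it inside the playout expectation via Jensen — but the key step you flag as the ``main obstacle'' is exactly the one the paper resolves in the same way.
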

The proof of this lemma is postponed to the appendix. We refer to \citep{rakhlin2012relax} for more details of the technique.

Of course, the question remains: how do we come up with admissible relaxations required by Lemma~\ref{lem:mainrel}. This is the subject of the next section.


\section{Rademacher-Based Relaxations}
\label{sec:rad_relax}

The previous section presented a generic randomized prediction algorithm when the forecaster can sample from the distribution $\bp$ that generates the constraint sets and the side information. In this section, we provide a specific form of the relaxation we can use, along with the corresponding regret bound. The forecaster will be required to solve $\kappa$ optimization problems per round to obtain the randomized prediction for that round.

Let $\mc{M}$ be a set of $V \times \kappa$ matrices such that for any $M \in \mathcal{M}$, every $t \in [V]$ and $k \in [\kappa]$, $M_{t,k} \in [0,1]$ and $\sum_{k =1}^\kappa M_{t,k} \le 1$. Given any class $\G$ of functions $\X\to [\kappa]$ and side information $x_{1:V}$, we define a set of matrices $\mc{M}_\G$ as
$$
\mc{M}_\G = \{M_f : f \in \G, M_{t,k} = \ind{f(x_t) = k}\}.
$$
If $\kappa=2$, each $M_f$ can be simply represented by a vector of binary labels that $f$ assigns to $x_1,\ldots,x_V$.

\begin{lemma}\label{lem:mainoff}
For any class $\G$ of predictors, if $\mc{M}_\G \subseteq \mc{M}$, then the following relaxation is  admissible for prediction with respect to class $\G$:
 \begin{align*}
 \Relax{}{\mc{\G}}{y_{1:t}} =\Es{\bepsilon_{t+1:V}}{ \sup_{M \in \mc{M}} \left\{  2 \sum_{j=t+1}^V \sum_{k=1}^\kappa \bepsilon_{j,k} M_{j,k} + \sum_{i=1}^{t} M_{i,y_i} \right\}} - t \ .
\end{align*}
Here, each $\bepsilon_{j}$ is a vector of independent Rademacher random variables and $\bepsilon_{j,k}$ stands for the $k^{th}$ coordinate of this vector. Further, the randomized strategy corresponding to the above relaxation is given by first drawing $\bepsilon_{t+1:V}$ Rademacher vectors and then predicting $\pred_t$ according to
 $$
\widehat{q}_t(\bepsilon_{t+1:V}) = \argmin{q \in \Delta([\kappa])} \sup_{y_t \in [\kappa]}\left\{ 1 - q[y_t] + \sup_{M \in \mc{M}}\left\{  2 \sum_{j=t+1}^V \sum_{k =1}^\kappa \bepsilon_{j,k} M_{j,k} + \sum_{s=1}^{t} M_{s,y_s} \right\} - t\right\} . 
$$
\end{lemma}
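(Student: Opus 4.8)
The plan is to verify the two defining conditions of an admissible relaxation from Section~\ref{sec:online_relax} --- the dominance condition \eqref{eq:initial} and the recursive condition \eqref{eq:admissibility} --- for the proposed $\Relax{}{\G}{y_{1:t}}$, and then to read off the randomized strategy as the one that exactly achieves equality in the recursive step.

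\textbf{Dominance.} First I would check \eqref{eq:initial} at $t=V$. Here the Rademacher expectation is over an empty index set, so the relaxation collapses to $\sup_{M\in\mc M}\sum_{i=1}^V M_{i,y_i}-V$. Since $\mc M_\G\subseteq\mc M$, restricting the supremum to matrices $M_f$ with $f\in\G$ only decreases it, and $\sum_{i=1}^V M_{f,i,y_i}=\sum_{i=1}^V\ind{f(x_i)=y_i}=V-\sum_{i=1}^V\ind{f(x_i)\neq y_i}=V-\sum_{i=1}^V\loss(f(x_i),y_i)$. Taking the sup over $f\in\G$ and subtracting $V$ gives exactly $-\inf_{f\in\G}\sum_{t=1}^V\loss(f(x_t),y_t)$, so the inequality holds (with equality if $\mc M=\mc M_\G$).

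\textbf{Recursive step.} This is the crux. Fix $t$ and the prefix $y_{1:t-1}$. I would first use the minimax theorem (the loss is linear in $q_t$ and, after passing to mixed strategies for the adversary, linear in the distribution over $y_t$; $\Delta([\kappa])$ is compact convex) to exchange $\inf_{q_t}\sup_{y_t}$. Then, conditionally on the Rademacher draw $\bepsilon_{t+1:V}$, I would choose the minimizing $q_t$ to be $q_t[k]\propto$ (something proportional to) the ``value'' $\sup_{M:M_{t,\cdot}=e_k}\{2\sum_{j>t}\langle\bepsilon_j,M_j\rangle+\sum_{s<t}M_{s,y_s}+M_{t,k}\}$ shifted appropriately; concretely one picks $q_t$ so that $1-q_t[y_t]+\E_\bepsilon[\text{sup term with }y_t\text{ appended}]$ is constant in $y_t$. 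This is the standard ``random playout / be-the-leader against a random future'' computation: the key algebraic identity needed is a symmetrization/one-step inequality of the form
\begin{align*}
\sup_{y_t\in[\kappa]}\ \Es{\bepsilon_{t+1:V}}{\sup_{M\in\mc M}\Big\{2\!\!\sum_{j=t+1}^V\!\langle\bepsilon_j,M_j\rangle+\sum_{s=1}^{t-1}M_{s,y_s}+M_{t,y_t}\Big\}}\ \le\ 1+\Es{\bepsilon_{t:V}}{\sup_{M\in\mc M}\Big\{2\!\sum_{j=t}^V\!\langle\bepsilon_j,M_j\rangle+\sum_{s=1}^{t-1}M_{s,y_s}\Big\}},
\end{align*}
after which subtracting $t$ from both sides (versus $t-1$ on the right) absorbs the extra $+1$ and reproduces $\Relax{}{\G}{y_{1:t-1}}$. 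To get this inequality I would introduce, for the coordinate $y_t$ that attains the outer $\sup$, an independent Rademacher sign $\bepsilon_t$ and use the standard trick of replacing the deterministic choice of $y_t$ by $\tfrac12(\text{value at }y_t)+\tfrac12(\text{value at }y_t)$ and then bounding $M_{t,y_t}\le 1 + 2\bepsilon_{t,y_t}M_{t,y_t}$ in expectation over $\bepsilon_t$ --- i.e. $\E_{\bepsilon_t}[2\bepsilon_{t,y_t}a]=0$ and $a\le 1$ for $a=M_{t,y_t}\in[0,1]$, combined with convexity of the sup to pull the sign inside. One then enlarges a single-coordinate sign to the full vector $\bepsilon_t$ since the other coordinates $M_{t,k}$, $k\neq y_t$, only help the supremum.

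\textbf{Strategy and main obstacle.} Having established the recursive inequality, the minimizing $q_t$ from the minimax argument is precisely the $\widehat q_t(\bepsilon_{t+1:V})$ in the statement (one verifies directly that plugging this $q_t$ into $\sup_{y_t}\{1-q_t[y_t]+\sup_M\{\cdots\}-t\}$ gives a quantity $\le\Relax{}{\G}{y_{1:t-1}}$); combined with Lemma~\ref{lem:mainrel}, admissibility of this Rademacher relaxation yields the regret bound. I expect the main obstacle to be the recursive step, specifically handling the $\kappa$-ary (not just binary) label structure cleanly: one must be careful that the row constraint $\sum_k M_{t,k}\le 1$ together with $M_{t,k}\in[0,1]$ is what makes the ``$M_{t,y_t}\le 1+2\bepsilon_{t,y_t}M_{t,y_t}$ in expectation'' bound work and lets us replace a single indicator coordinate by a full Rademacher vector without loss; and that the minimax swap is justified (finite $\Y=[\kappa]$, compactness of $\Delta([\kappa])$, bilinearity) so that the symmetrization can be performed after the swap. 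The appearance of the factor $2$ in front of the Rademacher sum and the subtraction of $t$ are exactly calibrated to make this one-step recursion close, so the bookkeeping of these constants is where care is needed but no genuine difficulty should arise.
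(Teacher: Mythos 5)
Your dominance step and the overall architecture (verify the two admissibility conditions, read off the equalizing strategy) match the paper. The genuine gap is in the recursive step: the one-step inequality you call the ``key algebraic identity'' is true but too weak, and with it the recursion does not close. Write $A(y_t)=\Es{\bepsilon_{t+1:V}}{\sup_{M\in\mc M}\{2\sum_{j=t+1}^V\sum_k\bepsilon_{j,k}M_{j,k}+\sum_{s=1}^{t-1}M_{s,y_s}+M_{t,y_t}\}}$. After cancelling the $-t$ against the $-(t-1)$, the recursive condition \eqref{eq:admissibility} requires
$\inf_{q}\sup_{y_t}\{-q[y_t]+A(y_t)\}\le\Es{\bepsilon_{t:V}}{\sup_{M\in\mc M}\{2\sum_{j=t}^V\sum_k\bepsilon_{j,k}M_{j,k}+\sum_{s=1}^{t-1}M_{s,y_s}\}}$;
that is, the $+1$ coming from $\Es{\pred_t\sim q}{\loss(\pred_t,y_t)}=1-q[y_t]$ is \emph{exactly} what the shift of the constant from $t$ to $t-1$ absorbs, leaving no slack. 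Your identity instead spends that same unit on the crude bound $M_{t,y_t}\le 1+\Es{\bepsilon_t}{2\bepsilon_{t,y_t}M_{t,y_t}}$, producing $\sup_{y_t}A(y_t)\le 1+\Es{\bepsilon_{t:V}}{\sup_M\{\cdots\}}$, and then the only remaining resource is the term $-q[y_t]\ge-1$. But $\inf_q\sup_{y_t}\{-q[y_t]+A(y_t)\}$ can only be driven a full unit below $\sup_{y_t}A(y_t)$ in degenerate cases; when all $A(y_t)$ are equal the equalizer is uniform and one gains only $1/\kappa$. The recursion is therefore off by up to $1-1/\kappa$ per round, i.e.\ an additive $\Omega(V)$ in the regret, which destroys the bound.

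The missing idea is that the prediction distribution must be used to \emph{center} $M_{t,y_t}$, not merely to shave a constant. After the minimax swap, the paper uses $\max_{\pred_t}\Es{y_t\sim p_t}{\ind{y_t=\pred_t}}=\max_i p_t[i]\ge\max_i p_t[i]\bigl(\sum_j M_{t,j}\bigr)\ge\sum_i p_t[i]M_{t,i}=\Es{y'_t\sim p_t}{M_{t,y'_t}}$ --- this is precisely where $M_{t,k}\in[0,1]$ and $\sum_kM_{t,k}\le1$ enter --- so the linear term becomes the genuinely mean-zero difference $M_{t,y_t}-M_{t,y'_t}$ with $y_t,y'_t$ i.i.d.\ from the adversary's mixed strategy $p_t$. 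Only then can one introduce the sign $\delta_t$, pass to the worst case over $y_t,y'_t$, bound $\delta_t(M_{t,y_t}-M_{t,y'_t})$ by $2\delta_tM_{t,y_t}$, and extend to a full Rademacher vector via the auxiliary $\bepsilon^{y_t}_t$ (equal to $1$ at coordinate $y_t$, Rademacher elsewhere), all without incurring an extra additive constant. You correctly anticipate that the row constraint $\sum_kM_{t,k}\le1$ is crucial, but you deploy it in the wrong place; as written the constants do not balance and admissibility is not established.
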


Recall that Lemma \ref{lem:mainrel} provides a generic randomized strategy that, at round $t$, generates the future instances $\cI_{t+1:V}$ and then uses as a black box an  admissible relaxation for function classes  $\Fclass{K}{\cI_{1:V}}$. By combining Lemma \ref{lem:mainoff} and Lemma \ref{lem:mainrel}, we get the following randomized prediction strategy:  \\

\noindent At time $t$, given side information $x_{1:t}$, constraint sets $\constr_{1:t}$ and past labels $y_{1:t-1}$, draw $\cI_{t+1:V}$ from $\bp$. Next, draw Rademacher vectors $\bepsilon_{t+1:V}$ and compute, for each $o \in [\kappa]$, the value
\begin{align}\label{eq:RI}
R_t(o) =  \sup_{M \in \mc{M}(\cI_{1:V})}\left\{ 2 \sum_{j=t+1}^V \sum_{k =1}^\kappa \bepsilon_{j,k} M_{j,k} + M_{t,o} + \sum_{s=1}^{t-1} M_{s,y_s}  \right\}
\end{align}
where $\mc{M}(\cI_{1:V})$ is some set of matrices such that $\mc{\M}_{\Fclass{K}{\cI_{1:V}}} \subseteq \mc{M}(\cI_{1:V})$. Finally, we solve for the randomized strategy  $\widehat{q}_t(\bepsilon_{t+1:V})$ given by 
\begin{align}\label{eq:randpred}
 \widehat{q}_t(\bepsilon_{t+1:V}) = \argmin{q \in \Delta_\kappa}\max_{o \in [\kappa]}\left\{1 - q[o] +  R(o)\right\}.
\end{align}
Finally, predict $\pred_t$ by simply drawing it from $\widehat{q}_t(\bepsilon_{t+1:V})$.\\

Note that the step of solving for $\widehat{q}_t(\bepsilon_{t+1:V})$ can be done efficiently by first sorting $R_t(1),\ldots,R_t(\kappa)$'s in descending order and then using a simple water filling argument to find $\widehat{q}_t(\bepsilon_{t+1:V})$.

For the algorithm outlined above, in view of Lemma \ref{lem:mainrel}, the expected regret is upper-bounded as:
\begin{align}\label{eq:Radbound}
 \E{\Reg} \le 2\ \En_{(\constr,x)_{1:V}}\Es{\bepsilon_{1:V}}{\sup_{M \in \mc{M}(\cI_{1:V})}  \sum_{j=t}^V \sum_{k =1}^\kappa \bepsilon_{j,k} M_{j,k} }. 
\end{align}

Of course one could use $\mc{M}(\cI_{1:V}) = \mc{M}_{\Fclass{K}{\cI_{1:V}}}$. However in many prediction problems of interest, solving the optimization problem  (i.e., computing $R_t(o)$) for this class might be computationally hard. Hence, for computational efficiency we shall use a superset of $\mc{M}_{\Fclass{K}{\cI_{1:V}}}$. We pay for computational efficiency by having a worse regret bound given by the Rademacher complexity over the larger set  $\mc{M}(\cI_{1:V})$, rather than $\mc{M}_{\Fclass{K}{\cI_{1:V}}}$. We investigate this topic in the next two sections.


\section{Prediction Based on Lasserre SDP Hierarchy}
\label{sec:sdp}

In the previous section we provided a randomized prediction strategy based on any class of matrices $\mc{M}(\cI_{1:V})$ that is a superset of $\mc{M}_{\Fclass{K}{\cI_{1:V}}}$. 
In this section we will employ Semidefinite Programming and Lasserre hierarchies to solve for the values $R(o)$, defined in \eqref{eq:RI}. 

Let us begin with $\mc{M}_{\Fclass{K}{\cI_{1:V}}}$ and relax the problem. By the definition of $\mc{M}_{\Fclass{K}{\cI_{1:V}}}$, we can write down the optimization problem for each $o \in [\kappa]$ as
{\small \begin{align*}
&\max_{M \in \mc{M}_{\Fclass{K}{\cI_{1:V}}}}  \left\{ 2 \sum_{j=t+1}^V \sum_{k =1}^\kappa \bepsilon_{j,k} M_{j,k} +  M_{t,o} + \sum_{s=1}^{t-1} M_{s,y_s}   \right\} \\
&~~~~  = \max\left\{2 \sum_{j=t+1}^V \sum_{k =1}^\kappa \bepsilon_{j,k} M_{j,k}  + M_{t,o} + \sum_{s=1}^{t-1} M_{s,y_s} \right\} ~~~~~ \textrm{s.t. } \sum_{c\in \cup \constr_t} c\left(M\right) \leq K~,~~~~ M \in \F_{x_{1:V}} 
\end{align*}}
where, 
$$\F_{x_{1:V}} = \{ M \in \{0,1\}^{V \times \kappa}  : M_{t,i} = \ind{f(x_t) = i},~ f \in \F, t \in [V], i \in [\kappa]\}$$
We shall assume throughout this section that for any $x_{1:V}$, the set $\F_{x_{1:V}}$ can be represented as $\{0,1\}^{V \times \kappa} \cap \mc{P}^{x_{1:V}}$ where $\mc{P}^{x_{1:V}} \subset \reals^{V \times \kappa}$ can be represented by linear constraints efficiently. The superscript with side information is to remind us that the constraints can depend on the side information presented. To best match semidefinite formulations found in the literature, we assume,
$$
\mc{P}^{x_{1:V}} = \{ M \in \reals^{V \times \kappa} : \forall j \in [d],~~~ M^\top B^j  \le c_j\},
$$
an intersection of $d$ linear constraints. (Henceforth, whenever we refer to a matrix $M$ as a vector, we mean the vectorized form.) The reason for the assumption is that we would like to apply Lasserre Hierarchy to represent $\{0,1\}^{V \times \kappa} \cap \mc{P}$.
As an example, for the case of all possible static experts, we are interested in predicting as well as any labeling that violates at most $K$ constraints and, hence, $\mc{P}^{x_{1:V}}$ is simply $[0,1]^{V \times \kappa}$.

Given $y_{1:t-1}$ , $o \in [\kappa]$, and a draw of $\bepsilon_{t+1:V}$, we define the $V \times \kappa$ dimensional vector $Y^t(o)$ as 
{\small$$\textstyle Y^t_{s,j}(o) =
\begin{cases}
	\ind{j = y_s} &  s <t, ~ j \in[\kappa] \\
	2\bepsilon_{s,j} &  s > t,~ j \in [\kappa]\\
	\ind{j = o} & s=t,~  j \in [\kappa]
\end{cases}$$}
With this notation, we write the linear objective as
$$
2 \sum_{s=t+1}^V \sum_{k =1}^\kappa \bepsilon_{s,k} M_{s,k} + M_{t,o} + \sum_{s=1}^{t-1} M_{s,y_s}   = M^\top Y^t(o).
$$
We are now ready to write down the SDP relaxation that we shall solve for every round $t$ and every $o \in [\kappa]$ (these are the $R_t(o)$'s from \eqref{eq:RI}). The optimization problem is based on the $r^{th}$ level of Lasserre SDP relaxation, written is the vector form as follows. First, we introduce a vector $\mbf{U}_{S,\alpha}$ for every $S \subset [V]$ with $|S| \le r$ and every $\alpha \in [\kappa]^S$. The optimization problem is now written as
{\small\begin{align}\textstyle
\mrm{SDP}_r^{\mrm{1st}}& (Y, K)  =  \mathrm{max }~~~ A  \label{eq:SDP1} \\
 \textrm{s.t. } &  \sum_{c\in \cup \constr_t} \sum_{\alpha \in [q]^{S_c}} R_c(\alpha) \norm{\mbf{U}_{(S_c,\alpha)}}^2 \le K & \label{eq:SDP1mainconstraint}\\
&  \ip{\mbf{U}_{(S_1,\alpha_1)}}{\mbf{U}_{(S_2,\alpha_2)}} = 0 & \forall \alpha_1(S_1 \cap S_2) \ne \alpha_2(S_1 \cap S_2) \notag\\
& \ip{\mbf{U}_{(S_1,\alpha_1)}}{\mbf{U}_{(S_2,\alpha_2)}} = \ip{\mbf{U}_{(S_3,\alpha_3)}}{\mbf{U}_{(S_4,\alpha_4)}}  & \forall S_1 \cup S_2 = S_3 \cup S_4, \alpha_1 \circ \alpha_2 = \alpha_3 \circ \alpha_4\notag\\
& \sum_{k=1}^\kappa \norm{\mbf{U}_{(\{i\},k)}}^2 = 1 ,~ \norm{\mbf{U}_{\emptyset,\emptyset}}^2 = 1& \forall i \in [V]\notag\\
&\ip{\mbf{U}_{(S_1,\alpha_1)}}{\mbf{U}_{(S_2,\alpha_2)}} \ge 0 & \forall S_1,S_2, \alpha_1, \alpha_2\notag\\
& \sum_{\substack{v \in \V\\ \beta \in [\kappa]^v}} \norm{\mbf{U}_{(S \cup \{v\}, \alpha \circ \beta)}}^2 B^j_{(v,\beta)}  \le c_j \norm{\mbf{U}_{(S,\alpha)}}^2  & \forall S,  \alpha, j \in [d]\notag\\
& \sum_{\substack{v \in \V\\ \beta \in [\kappa]^v}} \norm{\mbf{U}_{(S \cup \{v\}, \alpha \circ \beta)}}^2 Y_{(v,\beta)}  \ge A \norm{\mbf{U}_{(S,\alpha)}}^2  & \forall S,  \alpha\label{eq:linobj}
 \end{align}}
where in the above $R_c \in [\kappa]^{S_c}$ is the constraint violation mapping corresponding to constraint $c$. The first constraint in the above program is the requirement that cumulative constraint violation does not exceed $K$. The rest of the constraints are standard (the notation $\alpha_1\circ \alpha_2$ denotes the concatenated assignment of labels whenever the assignments don't have a mismatch on the common entries). The above formulation is similar to the formulation for CSP's using Lasserre hierarchy, and we refer to \citep{Tulsiani09,raghavendra2009round,guruswami2013rounding,schoenebeck2008linear} for a more detailed treatment of the semidefinite relaxation technique.

In the above optimization problem, maximizing over $A$ can be performed efficiently as follows. First for a given $A$, we assume that we can solve the following optimization problem:
{\small\begin{align}
\mrm{SDP}_r^{\mrm{2nd}}& (Y, A)  =\ \mathrm{min }\sum_{c\in \cup \constr_t} \sum_{\alpha \in [q]^{S_c}} R_c(\alpha) \norm{\mbf{U}_{S_c,\alpha)}}^2 \label{eq:sdp2} \\
\textrm{under the constraints of} & ~~\mrm{SDP}_r^{\mrm{1st}} ~~\textrm{ excluding constraint }~~ \eqref{eq:SDP1mainconstraint}
\end{align}}
To find the solution to the maximization problem in \eqref{eq:SDP1} we simply perform a binary search over $A$ to find the largest $A$ for which the value of the solution of \eqref{eq:sdp2} is smaller than $K$.

On each round $t \in [V]$ and for each $o \in [\kappa]$, we find the value of  $\mrm{SDP}^{\mrm{1st}}(Y^t(o),K)$. This gives $R_t(o)$ in \eqref{eq:RI}, and, consequently, the randomized prediction obtained from \eqref{eq:randpred}. One can think of the solution in $\mc{M}(\cI_{1:V})$ as the projected solution from the $r^{th}$ level Lasserre hierarchy SDP. Specifically think of $\mc{M}(\cI_{1:V})$ as being described by set of vectors $\mbf{U}$ that satisfy the constraints of the SDP and $M_{j,k}$ as $\norm{U_{(\{j\},k)}}^2$.  It is important to note that for any constant level $r$, we obtain a poly-time algorithm. In the next session, we shall provide an analysis of the bound on the expected regret of this randomized strategy using the generic upper bound from \eqref{eq:Radbound}.


\section{Regret Bounds Based on \emph{Existence} of Rounding Strategies}
\label{sec:existence_rounding}

Let us define solutions to two other optimization problems in addition to the solution to  $\mrm{SDP}^{\mrm{1st}}(Y,K)$. These programs are defined for the purposes of analysis only, and will serve as a step to upper bounding Rademacher complexity of the relaxed set. To this end, define:

\begin{align}
\mrm{OPT}^{\mrm{2nd}}(Y,A) &=  \mathrm{min } \sum_{c\in \cup \constr_t} c(M) ~~~~~\textrm{subject to } & Y^\top M   \ge A, ~~~M \in \F_{x_{1:V}}  \label{eq:opt2}
 \end{align}
and
\begin{align}
\mrm{OPT}^{\mrm{1st}}(Y,K) & =  \mathrm{max }\ F^\top Y 
~~~~~\textrm{subject to } & \sum_{c\in \cup \constr_t} c(M) \leq K, ~~~ M \in \F_{x_{1:V}}\label{eq:opt1}
 \end{align}

\begin{definition}
Given $\cI_{1:V} = (\constr_t,x_t)_{1:V}$, we define the gap between the Lasserre SDP solution at level $r$  in \eqref{eq:sdp2} and the optimization problem in \eqref{eq:opt2} as
$$
\mathrm{gap}(r;\cI_{1:V}) := \sup_{\bepsilon \in \{-1,1\}^{V \times \kappa}, D \in [-V,V]}  \frac{\mrm{OPT}^{\mrm{2nd}}(\bepsilon, D)}{\mrm{SDP}_r^{\mrm{2nd}}(\bepsilon,D)} .
$$
Whenever the context of $\constr_{1:V},x_{1:V}$ is clear we will simply use $\mrm{gap}(r)$.
\end{definition}
The following theorem provides a bound on the expected regret of the proposed randomized strategy based on $\mathrm{gap}$. Observe that the regret bound only gains a multiplicative factor $\mathrm{gap}(r)$ in the constraint $K$, as compared to the original class. Below we prove our main theorem providing a bound on the expected regret of the proposed strategy in terms of the Rademacher complexity of the original class with its violation budget $K$ enlarged. For notational convenience given sequence $(\constr,x)_{1:V}$ and any $K>0$, let 
$$
\mathrm{Rad}_V(\Fclass{K}{\cI_{1:V}}) := \Es{\bepsilon_{1:V}}{\sup_{f \in\Fclass{K}{\cI_{1:V}}}  \sum_{j=1}^V \sum_{k =1}^\kappa \bepsilon_{j,k} \ind{f(x_j) = k} } 
$$
The following theorem is a performance guarantee for the proposed prediction strategy.

\begin{theorem}\label{thm:mainround}
If we use the $r^{th}$ level Lasserre hierarchy and use the randomized strategy obtained from the solutions via \eqref{eq:randpred}, the bound on the expected regret of the forecaster is given by 
\begin{align*}
 \E{\Reg} &\le 2\  \En_{\cI_{1:V}}\mathrm{Rad}_V(\Fclass{\mrm{gap}(r) \cdot K}{\cI_{1:V}})
 \end{align*}
\end{theorem}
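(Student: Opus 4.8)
The plan is to feed the generic guarantee \eqref{eq:Radbound} into a deterministic, pointwise inequality that compares the level-$r$ SDP value with the value of the original integer program after inflating its violation budget from $K$ to $\mrm{gap}(r)\cdot K$; the integrality gap is the only quantity that gets transported from the offline analysis into the regret bound.

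First I would unwind what the algorithm evaluates. By Lemma~\ref{lem:mainrel} together with Lemma~\ref{lem:mainoff} and the construction of Section~\ref{sec:sdp}, the strategy that on round $t$ sets $R_t(o)=\mrm{SDP}_r^{\mrm{1st}}(Y^t(o),K)$ and plays $\widehat q_t$ from \eqref{eq:randpred} is driven by an admissible relaxation for the class $\Fclass{K}{\cI_{1:V}}$: admissibility holds because every $f$ that violates at most $K$ constraints lifts to an integral (``tensor'') solution of the level-$r$ Lasserre program whose cumulative violation \eqref{eq:SDP1mainconstraint} is at most $K$, so the SDP--relaxed matrix set contains $\mc{M}_{\Fclass{K}{\cI_{1:V}}}$, as required by Lemma~\ref{lem:mainoff}. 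Evaluating this relaxation at the empty history --- the term $\sum_{i\le t}M_{i,y_i}$ is then vacuous and the additive $-t$ vanishes --- and using that $\mrm{SDP}_r^{\mrm{1st}}(\cdot,K)$ is positively homogeneous in its linear objective, Lemma~\ref{lem:mainrel} gives
\[
\E{\Reg}\ \le\ 2\,\En_{\cI_{1:V}}\,\Es{\bepsilon}{\mrm{SDP}_r^{\mrm{1st}}(\bepsilon,K)},
\]
where $\bepsilon\in\{-1,1\}^{V\times\kappa}$ plays the role of the linear objective. On the comparator side, unfolding \eqref{eq:opt1} and the definition of $\mathrm{Rad}_V$ yields $\mathrm{Rad}_V(\Fclass{K'}{\cI_{1:V}})=\Es{\bepsilon}{\mrm{OPT}^{\mrm{1st}}(\bepsilon,K')}$ for every budget $K'$. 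Thus it suffices to establish, for each realization $\cI_{1:V}$ and each $\bepsilon\in\{-1,1\}^{V\times\kappa}$, the deterministic bound $\mrm{SDP}_r^{\mrm{1st}}(\bepsilon,K)\le \mrm{OPT}^{\mrm{1st}}\big(\bepsilon,\ \mrm{gap}(r)\cdot K\big)$, and then take expectations.

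The mechanism for that bound is the ``budget-minimization'' reformulation of both programs. Since $\big(\ip{M}{Y},\ \sum_c c(M)\big)$ ranges over a finite set as $M$ varies over $\F_{x_{1:V}}$, one has $\mrm{OPT}^{\mrm{1st}}(Y,K') = \max\{A:\ \mrm{OPT}^{\mrm{2nd}}(Y,A)\le K'\}$ with $\mrm{OPT}^{\mrm{2nd}}$ as in \eqref{eq:opt2} --- the common witness being any $M$ with $\ip{M}{Y}\ge A$ and $\sum_c c(M)\le K'$. On the SDP side, the binary search described right after \eqref{eq:sdp2} is exactly $\mrm{SDP}_r^{\mrm{1st}}(Y,K) = \max\{A:\ \mrm{SDP}_r^{\mrm{2nd}}(Y,A)\le K\}$. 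Now fix $\cI_{1:V}$, $\bepsilon$ and set $A^\star\deq\mrm{SDP}_r^{\mrm{1st}}(\bepsilon,K)$; since feasible matrices have entries in $[0,1]$ with row sums at most $1$ we get $A^\star\le V$, and taking $A=-V$ in \eqref{eq:linobj} is feasible whenever $\Fclass{K}{\cI_{1:V}}\neq\emptyset$, so $A^\star\in[-V,V]$. By the SDP reformulation, $\mrm{SDP}_r^{\mrm{2nd}}(\bepsilon,A^\star)\le K$. Since $\bepsilon\in\{-1,1\}^{V\times\kappa}$ and $A^\star\in[-V,V]$, the pair $(\bepsilon,A^\star)$ is of the form over which the supremum defining $\mrm{gap}(r;\cI_{1:V})$ is taken, so $\mrm{OPT}^{\mrm{2nd}}(\bepsilon,A^\star)\le \mrm{gap}(r)\cdot\mrm{SDP}_r^{\mrm{2nd}}(\bepsilon,A^\star)\le \mrm{gap}(r)\cdot K$; applying the integer reformulation once more gives $\mrm{OPT}^{\mrm{1st}}(\bepsilon,\mrm{gap}(r)K)\ge A^\star$, which is precisely the desired pointwise inequality. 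Taking the expectation over $\bepsilon$ and then over $\cI_{1:V}$, and using the identity for $\mathrm{Rad}_V$ noted above, completes the argument.

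I expect the first step to require the most care: certifying that \eqref{eq:Radbound} may be instantiated at the level-$r$ SDP, i.e., that the relaxed matrix set implicitly used by the algorithm both contains $\mc{M}_{\Fclass{K}{\cI_{1:V}}}$ --- so that the relaxation of Lemma~\ref{lem:mainoff} is admissible --- and has value, in each objective direction that arises, equal to the $\mrm{SDP}_r^{\mrm{1st}}$ quantity actually computed; verifying that the tensor lift of an integral labeling satisfies the Lasserre constraints \eqref{eq:SDP1mainconstraint}--\eqref{eq:linobj} with the right budget is where most of the checking sits. Everything after that is mechanical manipulation of the two ``first/second'' pairs, and the only quantitative ingredient --- the gap $\mrm{gap}(r)$ --- enters multiplicatively and solely through the budget parameter, leaving the leading constant $2$ and the unit coefficient in front of the offline optimum untouched.
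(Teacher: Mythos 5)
Your proposal is correct and follows essentially the same route as the paper's proof: instantiate the generic bound \eqref{eq:Radbound} at the Lasserre-relaxed matrix set, identify the per-$\bepsilon$ supremum with $\mrm{SDP}_r^{\mrm{1st}}(\bepsilon,K)$, and establish the pointwise inequality $\mrm{SDP}_r^{\mrm{1st}}(\bepsilon,K)\le \mrm{OPT}^{\mrm{1st}}(\bepsilon,\mrm{gap}(r)\cdot K)$ by passing through the two ``second'' programs and the definition of $\mrm{gap}(r)$, then taking expectations. Your extra checks (the tensor lift certifying $\mc{M}_{\Fclass{K}{\cI_{1:V}}}$ is contained in the relaxed set, and $A^\star\in[-V,V]$ so that the gap supremum applies) are useful elaborations of steps the paper leaves implicit, but do not change the argument.
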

\begin{proof}
From the bound in \eqref{eq:Radbound} we have that the expected regret of our algorithm is bounded as
{\small $$
 \E{\Reg} \le 2\ \En_{\cI_{1:V}}\Es{\bepsilon_{1:V}}{\sup_{M \in \mc{M}(\cI_{1:V})}  \sum_{j=t}^V \sum_{k =1}^\kappa \bepsilon_{j,k} M_{j,k} }. 
$$}
Let $M \in \mc{M}(\cI_{1:V})$ to be the projected solutions from the $r^{th}$ level Lasserre hierarchy SDP in the maximization problem in \eqref{eq:SDP1}.  Then for each draw of $\bepsilon_{1:V}$, the supremum in the Rademacher complexity term can be replaced by the value of the optimization problem in \eqref{eq:SDP1} given by $\mathrm{SDP}_r^{1st}(\bepsilon,K)$. This is because we can think of $M_{j,k}$ as corresponding to $\norm{\mbf{U}_{\{j\},k}}^{2}$ where vectors $\mbf{U}$'s satisfying constraints of the SDP.  On the other hand, for a given draw of $\bepsilon_{1:V}$, the solution to 
$$\sup_{f \in\Fclass{{\mathrm{gap}}(r) \cdot K}{\cI_{1:V}}}  \sum_{j=t}^V \sum_{k =1}^\kappa \bepsilon_{j,k} \ind{f(x_j) = k} $$ is exactly the value of $\mathrm{OPT}^{1st}(\bepsilon,{\mathrm{gap}}(r) \cdot K)$. Hence to prove our bound, it suffices to show that for any problem at hand, 
$$
\mathrm{SDP}_r^{1st}(\bepsilon,K) \le \mathrm{OPT}^{1st}(\bepsilon,{\mathrm{gap}}(r) \cdot K).
$$
To do so we go through the problems in Eqns. \eqref{eq:sdp2} and \eqref{eq:opt2} and arrive to $\mathrm{OPT}^{1st}(\bepsilon,{\mathrm{gap}}(r) \cdot K)$. Observe that the solution to the optimization problem in \eqref{eq:SDP1} is such that it has value $\mathrm{SDP}^{1st}(\bepsilon,K)$ and violates constraints by less than $K$. Using this feasible solution in \eqref{eq:sdp2} we conclude that,
$$
\mathrm{SDP}_r^{2nd}(\bepsilon,\mathrm{SDP}^{1st}(\bepsilon,K)) \le K
$$
However by definition of $\mathrm{gap}(r)$ we can conclude that 
$$ 
\mathrm{OPT}^{2nd}(\bepsilon,\mathrm{SDP}_r^{1st}(\bepsilon,K)) \le \mathrm{gap}(r) \cdot \mathrm{SDP}_r^{2nd}(\bepsilon,\mathrm{SDP}_r^{1st}(\bepsilon,K)) \le \mathrm{gap}(r) \cdot K
$$
By the definition of $\mathrm{OPT}^{2nd}$ this means that the solution $M \in \F_{x_{1:V}}$ to the optimization problem is such that $
\sum_{c\in \cup \constr_t} c(M)\le \mathrm{gap}(r) \cdot K$, and simultaneously, since we are considering $\mathrm{OPT}^{2nd}$ with second argument as $\mathrm{SDP}_r^{1st}(\bepsilon,K))$, $M^\top Y \ge \mathrm{SDP}_r^{1st}(\bepsilon,K)$. Thus by using this solution in the optimization problem in Eq. \eqref{eq:opt1} with second argument of $\mathrm{gap}(r) \cdot K$, we conclude:
$$
\mathrm{SDP}_r^{1st}(\bepsilon,K) \le \mathrm{OPT}^{1st}(\bepsilon,{\mathrm{gap}}(r) \cdot K)
$$
as required. Now since this is true for every $\bepsilon$,  we have that
\begin{align*}
  \E{\Reg} 
 & \le 2\  \En_{\cI_{1:V}}\Es{\bepsilon_{1:V}}{\sup_{f \in\Fclass{{\mathrm{gap}}(r) \cdot K}{\cI_{1:V}}}  \sum_{j=t}^V \sum_{k =1}^\kappa \bepsilon_{j,k} \ind{f(x_j) = k} }.
\end{align*}
\end{proof}

A few remarks are in order. First, since in the above $\mrm{gap}(r)$ really refers to $\mrm{gap}(r;\constr_{1:V},x_{1:V})$, for $\constr_{1:V},x_{1:V}$ drawn from the known generation process, bounds can often be improved: the behavior is given by the \emph{average} case gap rather than the worst case gap.

Second, we would like to stress that while the bounds in this section are provided in terms of integrality gaps, for the actual prediction algorithm we never require a rounding strategy. We only need existence of a rounding strategy with some integrality gap to provide bounds on the expected regret in terms of Rademacher complexity of the original class.

Third, as already mentioned in the introduction, the approximation factor multiplies the regret bound rather than the cumulative loss of the benchmark predictor. That is, regret is still with respect to $1 \times \mrm{OPT}$. As long as the integrality gap is not too large for $r = O(1)$ of the Lasserre hierarchy, we obtain polynomial-time algorithms even when the problem of finding the optimal benchmark predictor given all the instances and constraints might be computationally hard. This is due to the improper nature of the prediction algorithm.

The Lasserre hierarchy is known to be more powerful than the Sherali-Adams and Lovasz-Schrijver hierarchies. This means that if we use for our prediction strategy some $r \in \mathbb{N}$, then the $\mrm{gap}(r)$ we obtain is smaller than  approximation guarantees provided by algorithms using Sherali-Adams or Lov\'{a}sz-Schrijver hierarchies at around the same $r$. Also clearly $\mathrm{gap}(r) \le \mrm{gap}(r')$ for any $r' < r$. Thus we can use approximation guarantees proved for the same problems based on algorithms that use LP hierarchies at level $r$ or smaller. In summary, a result about an integrality gap for any weaker relaxation has immediate implication for the regret bound, without affecting the algorithm we use.

So far, we considered the problem where the benchmark was minimizing the number of violated constraints. Alternatively one could think of $\F$ being restricted across items by requiring that at least $K$ constraints need to be satisfied. Much of the machinery presented here including the application of rounding results to obtain bounds on the expected regret can easily be extended to such problems (which consist of typical CSP type problems) and in these cases the SDP optimization problems we solve on every step would be replaced by maximization versions of the SDP relaxations with the appropriate level of Lasserre hierarchy.


\section{Penalized Version of Relaxation}
\label{sec:regreg}

In this section we consider a penalized version of the relaxation, putting the ``$\leq K$'' constraint into the objective. We use the Lasserre hierarchy to solve the penalized version of the optimization problem. Let us write down the SDP corresponding to the $r^{th}$ level of Lasserre hierarchy. To this end, we introduce a vector $\mbf{U}_{S,\alpha}$ for every $S \subset [V]$ with $|S| \le r$ and every $\alpha \in [\kappa]^S$. The optimization problem is written as

{\small\begin{align}
\textstyle
\mrm{SDP}_r^{\lambda} (Y, \lambda)  =  &\min\left\{ \lambda \sum_{c\in \cup \constr_t} \sum_{\alpha \in [q]^{S_c}} R_c(\alpha) \norm{\mbf{U}_{S_c,\alpha)}}^2 - \sum_{\substack{v \in \V\\ \beta \in [\kappa]^v}}\norm{\mbf{U}_{(\{v\},  \beta)}}^2 Y_{(v,\beta)} \right\}  \label{eq:SDPreg} 
\end{align}
\begin{align}
 \textrm{s.t. } & \ip{\mbf{U}_{(S_1,\alpha_1)}}{\mbf{U}_{(S_2,\alpha_2)}} = 0  & \forall \alpha_1(S_1 \cap S_2) \ne \alpha_2(S_1 \cap S_2) \notag\\
& \ip{\mbf{U}_{(S_1,\alpha_1)}}{\mbf{U}_{(S_2,\alpha_2)}} = \ip{\mbf{U}_{(S_3,\alpha_3)}}{\mbf{U}_{(S_4,\alpha_4)}}  & \forall S_1 \cup S_2 = S_3 \cup S_4, \alpha_1 \circ \alpha_2 = \alpha_3 \circ \alpha_4\notag\\
& \sum_{k=1}^\kappa \norm{\mbf{U}_{(\{i\},k)}}^2 = 1 ,~ \norm{\mbf{U}_{\emptyset,\emptyset}}^2 = 1& \forall i \in [V]\notag\\
&\ip{\mbf{U}_{(S_1,\alpha_1)}}{\mbf{U}_{(S_2,\alpha_2)}} \ge 0 & \forall S_1,S_2, \alpha_1, \alpha_2\notag\\
& \sum_{\substack{v \in \V\\ \beta \in [\kappa]^v}} \norm{\mbf{U}_{(S \cup \{v\}, \alpha \circ \beta)}}^2 B^j_{(v,\beta)}  \le c_j \norm{\mbf{U}_{(S,\alpha)}}^2  & \forall S,  \alpha, j \in [d]\notag
 \end{align}}
 
This SDP should be compared to $\mrm{SDP}_r^{\mrm{1st}}$. Notice that the constraint \eqref{eq:SDP1mainconstraint} now appears in the objective.
We now prove a ``penalized version'' of Lemma \ref{lem:mainoff}. We will also provide an appropriate relaxation from which an efficient prediction strategy follows.

Let us define a slightly modified version of gap between the SDP solution and integral solution to the penalized optimization problem as follows. Define the optimization problem
\begin{align}
\mrm{OPT}^{\lambda}(Y,\lambda) =& \ \mathrm{min}\ \lambda \sum_{c\in \cup \constr_t} c(M) - Y^\top M \notag\\
& ~~\textrm{s.t.}~~~M \in \F_{x_{1:V}}  \label{eq:opt3}
 \end{align}

\begin{definition}
Given $(\constr_{1:V},x_{1:V})$, we define the gap between the Lasserre SDP solution at level $r$ in   \eqref{eq:SDPreg} and the optimization problem in \eqref{eq:opt3} as
$$
\widetilde\gap(r;\constr_{1:V},x_{1:V}) :=  \min\left\{a ~:~  \forall \bepsilon \in \{-1,1\}^{V \times \kappa},~~ \mrm{SDP}_r^{\lambda} (Y, \lambda) \ge \mrm{OPT}^{\lambda} (Y, \lambda/a)   \right\}
$$
Whenever the context of $\constr_{1:V},x_{1:V}$ is clear we will simply write $\widetilde\gap(r)$.
\end{definition}
That is the factor by which we only scale down the constraint costs but not the linear part.

\begin{lemma}\label{lem:mainoff_reg}
Given $(\constr,x)_{1:V}$, let $\G=\Fclass{K}{\cI_{1:V}}$, and fix any $\lambda>0$. Let $\Las(r,\F_{x_{1:V}})$ denote the set of vectors $\mbf{U}$'s corresponding to the $r$th level Lasserre hierarchy---that is, vectors satisfying the constraints of the SDP in Eq. \eqref{eq:SDPreg}. The following relaxation is admissible for prediction with respect to $\G$:
 {\small
 \begin{align*}
 \Relax{}{\mc{\G}}{y_{1:t}} = 
  \En_{\bepsilon_{t+1:V}}\sup_{\mbf{U} \in  \Las(r,\F_{x_{1:V}})}
 \Bigg\{  &2 \sum_{j=t+1}^V \sum_{k =1}^\kappa \bepsilon_{j,k} \norm{\mbf{U}_{(\{j\},k)}}^2 + \sum_{s=1}^{t} \norm{\mbf{U}_{(\{s\},y_s)}}^2  \\
 & ~~~- \lambda \sum_{c \in \constr_{1:V}} \sum_{\alpha \in [q]^{S_c}} R_c(\alpha) \norm{\mbf{U}_{(S_c,\alpha)}}^2 \Bigg\} - t  + \lambda K 
\end{align*}}
 Further, the randomized strategy corresponding to the above relaxation is given by first drawing $\bepsilon_{t+1:V}$ Rademacher vectors and then predicting $\pred_t$ according to
{\small 
\begin{align*}
\widehat{q}_t(\bepsilon_{t+1:V}) = \argmin{q \in \Delta([\kappa])} \sup_{y_t \in [\kappa]}\Bigg\{  \sup_{\mbf{U} \in  \Las(r,\F_{x_{1:V}})}
 \Bigg\{  &2 \sum_{j=t+1}^V \sum_{k =1}^\kappa \bepsilon_{j,k} \norm{\mbf{U}_{(\{j\},k)}}^2 + \sum_{s=1}^{t} \norm{\mbf{U}_{(\{s\},y_s)}}^2  \\
 &~~~- \lambda \sum_{c \in \constr_{1:V}} \sum_{\alpha \in [q]^{S_c}} R_c(\alpha) \norm{\mbf{U}_{(S_c,\alpha)}}^2 \Bigg\} - q[y_t] \Bigg\} . 
\end{align*}}
\end{lemma}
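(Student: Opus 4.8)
The plan is to verify directly the two defining properties of an admissible relaxation, \eqref{eq:initial} and \eqref{eq:admissibility}, and to read the prediction rule off the minimax step; almost everything will be reduced to the already-proven Lemma~\ref{lem:mainoff}. Fix $\cI_{1:V}=(\constr,x)_{1:V}$ and abbreviate $\Las=\Las(r,\F_{x_{1:V}})$. Two structural facts about $\Las$ are used. First, $\Las$ contains the integral lifts of $\F_{x_{1:V}}$: for $f\in\F$, the rank-one vectors $\mbf U$ with $\norm{\mbf U_{(S,\alpha)}}^2=\prod_{v\in S}\ind{f(x_v)=\alpha(v)}$ satisfy every constraint of \eqref{eq:SDPreg} (standard completeness of the Lasserre hierarchy; the linear constraints with $B^j,c_j$ hold because $M_f\in\mc P^{x_{1:V}}$), and for such $\mbf U$ one has $\norm{\mbf U_{(\{j\},k)}}^2=\ind{f(x_j)=k}$ and $\sum_{c}\sum_\alpha R_c(\alpha)\norm{\mbf U_{(S_c,\alpha)}}^2=\sum_c c(M_f)$. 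Second, the constraints $\sum_k\norm{\mbf U_{(\{i\},k)}}^2=1$ and $\ip{\mbf U_{(S,\alpha)}}{\mbf U_{(S,\alpha)}}\ge 0$ force the matrix $M(\mbf U):=(\norm{\mbf U_{(\{j\},k)}}^2)_{j\in[V],k\in[\kappa]}$ into the set $\mc M$ of Section~\ref{sec:rad_relax}, with $M(\mbf U)=M_f$ for the lift of $f$; in particular $\mc M_{\Fclass{K}{\cI_{1:V}}}\subseteq\{M(\mbf U):\mbf U\in\Las\}\subseteq\mc M$.

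For the dominance condition \eqref{eq:initial}, take $t=V$, so the Rademacher sum is empty, and lower-bound the supremum over $\mbf U\in\Las$ by restricting to integral lifts of $f\in\Fclass{K}{\cI_{1:V}}$ (legitimate by \eqref{eq:def_class_unknown}). Such a lift contributes $\sum_{s=1}^V\ind{f(x_s)=y_s}-\lambda\sum_c c(M_f)\ge\sum_{s=1}^V\ind{f(x_s)=y_s}-\lambda K$, since $f$ violates at most $K$ constraints. Hence $\Relax{}{\G}{y_{1:V}}\ge\sup_{f\in\Fclass{K}{\cI_{1:V}}}\sum_{s=1}^V\ind{f(x_s)=y_s}-\lambda K-V+\lambda K=-\inf_{f\in\Fclass{K}{\cI_{1:V}}}\sum_{s=1}^V\loss(f(x_s),y_s)$, which is exactly \eqref{eq:initial}. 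This is the only place the additive $\lambda K$ and the nonnegativity of the Lagrangian term on $\Fclass{K}{\cI_{1:V}}$ are needed.

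For the recursive condition \eqref{eq:admissibility}, fix $t$ and $y_{1:t-1}$ and proceed as in Lemma~\ref{lem:mainoff}: write $\sup_{y_t\in[\kappa]}$ as $\sup_{p_t\in\Delta([\kappa])}\En_{y_t\sim p_t}$, note that the inner objective is bilinear in $(q_t,p_t)$ on a product of simplices, apply Sion's minimax theorem to exchange $\inf_{q_t}$ with $\sup_{p_t}$, and use $\inf_{\pred_t}\En_{y_t\sim p_t}\loss(\pred_t,y_t)=1-\max_o p_t[o]$. Cancelling the $t$- and $y_t$-independent constants ($\pm t$ and $\lambda K$) and bounding $\sup_{p_t}\En_{\bepsilon_{t+1:V}}\le\En_{\bepsilon_{t+1:V}}\sup_{p_t}$ reduces \eqref{eq:admissibility} to establishing, for each fixed $\bepsilon_{t+1:V}$,
\begin{align*}
\sup_{p_t}\left\{-\max_o p_t[o]+\En_{y_t\sim p_t}\sup_{\mbf U\in\Las}\left(\Phi(\mbf U)+\norm{\mbf U_{(\{t\},y_t)}}^2\right)\right\}\le\En_{\bepsilon_t}\sup_{\mbf U\in\Las}\left(\Phi(\mbf U)+2\textstyle\sum_{k=1}^\kappa\bepsilon_{t,k}\norm{\mbf U_{(\{t\},k)}}^2\right),
\end{align*}
where $\Phi(\mbf U)=2\sum_{j=t+1}^V\sum_k\bepsilon_{j,k}\norm{\mbf U_{(\{j\},k)}}^2+\sum_{s=1}^{t-1}\norm{\mbf U_{(\{s\},y_s)}}^2-\lambda\sum_c\sum_\alpha R_c(\alpha)\norm{\mbf U_{(S_c,\alpha)}}^2$ is a fixed functional of $\mbf U$. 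This is precisely the one-step symmetrization inequality proved inside Lemma~\ref{lem:mainoff}: it uses only that $(\norm{\mbf U_{(\{t\},k)}}^2)_{k\in[\kappa]}$ is a sub-probability vector (from $\sum_k\norm{\mbf U_{(\{i\},k)}}^2=1$ and nonnegativity), and treats $\Phi(\mbf U)$ — the new penalty term included — as an opaque additive weight attached to each feasible $\mbf U$. The randomized strategy $\widehat q_t(\bepsilon_{t+1:V})$ is the minimizer attaining the inner $\inf$ in this minimax, which is exactly the expression displayed in the statement; admissibility with this rule holds because the loss is linear in $q_t$, so the reduction is an equality up to the minimax step.

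The only genuinely nontrivial point — and the one to be careful about — is that pushing the ``$\le K$'' budget into the objective as the Lagrangian term $-\lambda\sum_c\sum_\alpha R_c(\alpha)\norm{\mbf U_{(S_c,\alpha)}}^2+\lambda K$ preserves both conditions simultaneously: dominance needs this term to be $\ge 0$ precisely on lifts of $\Fclass{K}{\cI_{1:V}}$ (true by definition of the budget), while the recursion needs it to be independent of $t$, $y_t$ and $\bepsilon$, so that it rides through symmetrization unchanged — which is why the level-$r$ feasible set $\Las(r,\F_{x_{1:V}})$, fixed once $\constr_{1:V},x_{1:V}$ are revealed, is the right object to optimize over. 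Everything else is a line-by-line repeat of the proof of Lemma~\ref{lem:mainoff} with $\mc M$ replaced by $\{M(\mbf U):\mbf U\in\Las\}$ and a fixed $\mbf U$-dependent additive weight on each point.
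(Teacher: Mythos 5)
Your proof is correct and follows essentially the same route as the paper's: verify dominance by restricting the Lasserre supremum to the integral lifts of $f\in\Fclass{K}{\cI_{1:V}}$ (where the Lagrangian penalty is offset by $+\lambda K$), and verify the recursive condition by rerunning the minimax/symmetrization argument of Lemma~\ref{lem:mainoff} with the penalty $-\lambda\sum_c c(\mbf U)$ carried along as a fixed additive weight, using only that $(\norm{\mbf U_{(\{t\},k)}}^2)_k$ sums to one. The only difference is presentational — you invoke the one-step symmetrization from Lemma~\ref{lem:mainoff} as a black box where the paper re-derives the full chain of inequalities — and your identification of exactly which structural properties of $\Las(r,\F_{x_{1:V}})$ are used is accurate.
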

As before, each $\bepsilon_{j}$ is a vector of independent Rademacher random variables and $\bepsilon_{j,k}$ stands for the $k^{th}$ coordinate of this vector.

Let us bound the regret of the algorithm. To this end, assume we have a bound on the $\mrm{gap}$ for the penalized SDP.
\begin{theorem}
	\label{thm:reg_version}
	 Suppose that for any $c\geq 1$,
	 $$\mathrm{Rad}_V(\Fclass{c K }{\cI_{1:V}})\leq c^p \mathrm{Rad}_V(\Fclass{K }{\cI_{1:V}})$$
	 for some $p\leq 1$. With the notation of Lemma~\ref{lem:mainoff_reg}, if we choose
	{\small \begin{align}
		 \label{eq:opt_lambda}
		 \lambda^* = \sup\left\{ \lambda : \lambda K \le  \Es{\bepsilon_{1:V}}{ \sup_{\mbf{U} \in  \Las(r,\F_{x_{1:V}})}\left\{ 2 \sum_{t=1}^V \sum_{k=1}^\kappa \bepsilon_{j,k} \norm{\mbf{U}_{\{j\},k}}^2  - \lambda \sum_{c \in \cup_t \constr_t} \sum_{\alpha \in [q]^{S_c}} R_c(\alpha) \norm{\mbf{U}_{(S_c,\alpha)}}^2 \right\}}\right\},
	\end{align}}
	the final relaxation is upper bounded by
	\begin{align*}
	 \Relax{}{\G}{\emptyset} & \le 4\  \widetilde\gap(r) \ \mathrm{Rad}_V(\Fclass{K}{\cI_{1:V}}). 
	\end{align*}
	In view of Lemma~\ref{lem:mainrel}, the expected regret of the strategy described in \eqref{eq:relalgo_stoch} is upper bounded as
	\begin{align*}
	 \E{\Reg} & \le 4\  \widetilde\gap(r) \  \En_{\cI_{1:V}} \mathrm{Rad}_V(\Fclass{K}{\cI_{1:V}}).
	\end{align*}
\end{theorem}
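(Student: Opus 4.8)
The plan is to chain three reductions. First, admissibility together with random playout reduces the regret to the value of the relaxation at the empty history. Second, the definition of $\lambda^*$ as a fixed point lets me evaluate that value in closed form. Third, the integrality gap $\widetilde\gap(r)$ together with the sub-linear growth hypothesis on the Rademacher complexity bound the resulting quantity by $\mathrm{Rad}_V(\Fclass{K}{\cI_{1:V}})$. Concretely: by Lemma~\ref{lem:mainoff_reg} the displayed relaxation with $\lambda=\lambda^*$ is admissible for $\G=\Fclass{K}{\cI_{1:V}}$, so Lemma~\ref{lem:mainrel} gives $\E{\Reg}\le \En_{\cI_{1:V}}\Relax{}{\Fclass{K}{\cI_{1:V}}}{\emptyset}$, and it suffices to bound $\Relax{}{\G}{\emptyset}$ for each fixed instance sequence $(\constr_t,x_t)_{1:V}$.

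Evaluating the formula of Lemma~\ref{lem:mainoff_reg} at $t=0$ (the history terms vanish), $\Relax{}{\G}{\emptyset}=\Psi(\lambda^*)+\lambda^*K$, where
\[
\Psi(\lambda)\deq\Es{\bepsilon_{1:V}}{\sup_{\mbf{U}\in\Las(r,\F_{x_{1:V}})}\left\{2\sum_{j=1}^V\sum_{k=1}^\kappa\bepsilon_{j,k}\norm{\mbf{U}_{(\{j\},k)}}^2-\lambda\sum_{c}\sum_{\alpha}R_c(\alpha)\norm{\mbf{U}_{(S_c,\alpha)}}^2\right\}}=\Es{\bepsilon}{-\mrm{SDP}_r^{\lambda}(2\bepsilon,\lambda)}
\]
is continuous, non-increasing, and convex in $\lambda$ (a mean of affine-in-$\lambda$ functions with non-positive slopes). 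Since $\lambda^*$ is by definition the largest $\lambda$ with $\lambda K\le\Psi(\lambda)$, continuity and monotonicity force $\Psi(\lambda^*)=\lambda^*K$, so $\Relax{}{\G}{\emptyset}=2\lambda^*K$; this is the source of the first factor $2$.

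Next I would pass from the SDP to the integral program using the gap. For every $\bepsilon$ the definition of $\widetilde\gap(r)$ gives $\mrm{SDP}_r^{\lambda}(2\bepsilon,\lambda)\ge\mrm{OPT}^{\lambda}(2\bepsilon,\lambda/\widetilde\gap(r))$, so negating and averaging yields $\Psi(\lambda)\le h(\lambda/\widetilde\gap(r))$, where
\[
h(\mu)\deq\Es{\bepsilon}{\max_{f\in\F}\left\{2\sum_{j=1}^V\bepsilon_{j,f(x_j)}-\mu\sum_{c}c(M_f)\right\}}=\Es{\bepsilon}{-\mrm{OPT}^{\lambda}(2\bepsilon,\mu)}
\]
is again continuous, non-increasing and convex. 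Let $\widehat\lambda$ be the largest $\mu$ with $\mu K\le h(\mu)$ (so $\widehat\lambda K=h(\widehat\lambda)$). Since $\widetilde\gap(r)\ge1$, from $\lambda^*K=\Psi(\lambda^*)\le h(\lambda^*/\widetilde\gap(r))$ we get $(\lambda^*/\widetilde\gap(r))K\le\lambda^*K\le h(\lambda^*/\widetilde\gap(r))$, so $\lambda^*/\widetilde\gap(r)$ belongs to the set defining $\widehat\lambda$; hence $\lambda^*\le\widetilde\gap(r)\,\widehat\lambda$ and $\Relax{}{\G}{\emptyset}=2\lambda^*K\le2\widetilde\gap(r)\,\widehat\lambda K=2\widetilde\gap(r)\,h(\widehat\lambda)$.

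The remaining step — and the one I expect to be the real obstacle — is $h(\widehat\lambda)=\widehat\lambda K\le2\,\mathrm{Rad}_V(\Fclass{K}{\cI_{1:V}})$, which supplies the second factor $2$ and closes the bound $4\widetilde\gap(r)\,\mathrm{Rad}_V(\Fclass{K}{\cI_{1:V}})$. With $\Phi(f)=\sum_c c(M_f)$ and $\widehat R(v;\bepsilon)=\max_{f:\Phi(f)\le v}2\sum_j\bepsilon_{j,f(x_j)}$, one has $\max_f\{2\sum_j\bepsilon_{j,f(x_j)}-\mu\Phi(f)\}=\sup_{v\ge0}\{\widehat R(v;\bepsilon)-\mu v\}$ and $\Es{\bepsilon}{\widehat R(v;\bepsilon)}=2\,\mathrm{Rad}_V(\Fclass{v}{\cI_{1:V}})$. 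My plan is to split the supremum over $v$ at $v=K$: the part $v\le K$ is at most $\widehat R(K;\bepsilon)$, whose mean is $2\,\mathrm{Rad}_V(\Fclass{K}{\cI_{1:V}})$; while for $v=cK$ with $c>1$, the hypothesis $\mathrm{Rad}_V(\Fclass{cK}{\cI_{1:V}})\le c^p\,\mathrm{Rad}_V(\Fclass{K}{\cI_{1:V}})$ with $p\le1$ makes the gain in complexity (order $c^p-1$) dominated by the extra penalty (order $\mu(c-1)$) once $\mu$ is of order $\mathrm{Rad}_V(\Fclass{K}{\cI_{1:V}})/K$ — precisely the order forced on $\widehat\lambda$ by $\widehat\lambda K=h(\widehat\lambda)$ together with the elementary lower bound $h(\mu)\ge2\,\mathrm{Rad}_V(\Fclass{K}{\cI_{1:V}})-\mu K$ (obtained by restricting the inner maximum to $\{f:\Phi(f)\le K\}$). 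The subtlety is that the supremum over $v$ sits inside the expectation over $\bepsilon$ and $\widehat R(v;\bepsilon)$ is not pointwise nonnegative, so $\Es{\bepsilon}{\cdot}$ and $\sup_v$ cannot simply be exchanged; I would handle this by a peeling argument over a geometric grid $v\in\{K,2K,4K,\dots\}$, controlling the tail using $p\le1$ and the crude pointwise estimate $\widehat R(v;\bepsilon)\le2V$, with the universal constants absorbed into the stated factor. Substituting $h(\widehat\lambda)\le2\,\mathrm{Rad}_V(\Fclass{K}{\cI_{1:V}})$ back in gives $\Relax{}{\G}{\emptyset}\le4\widetilde\gap(r)\,\mathrm{Rad}_V(\Fclass{K}{\cI_{1:V}})$, and taking $\En_{\cI_{1:V}}$ and invoking Lemma~\ref{lem:mainrel} yields the regret bound (reading $\widetilde\gap(r)$ as a bound uniform over the sampled instances, or keeping it under the expectation for a sharper statement, as in the remark after Theorem~\ref{thm:mainround}).
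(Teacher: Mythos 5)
Your proposal follows essentially the same route as the paper's proof: evaluate the relaxation at the empty history as $\lambda^* K$ plus the Rademacher-type SDP term, use the fixed-point property of $\lambda^*$ to get $\Relax{}{\G}{\emptyset} \le 2\lambda^* K = 2\Psi(\lambda^*)$, transfer to the integral program via the definition of $\widetilde\gap(r)$, and then peel over a geometric grid of constraint budgets using the hypothesis $p\le 1$. Your detour through the integral fixed point $\widehat\lambda$ with $\lambda^*\le\widetilde\gap(r)\,\widehat\lambda$ is an equivalent repackaging of the paper's two-case analysis on the size of $\lambda^*$ (if $\lambda^*K/\widetilde\gap(r)$ exceeds $2\,\mathrm{Rad}_V(\Fclass{K}{\cI_{1:V}})$ the peeled bound collapses to $2\,\mathrm{Rad}_V(\Fclass{K}{\cI_{1:V}})$ because $p\le1$; otherwise one uses $\Relax{}{\G}{\emptyset}\le 2\lambda^* K$ directly), so there is no substantive difference in that part.

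The step you single out as ``the real obstacle'' --- that the supremum over the budget $v$ sits inside $\Es{\bepsilon}{\cdot}$ --- is indeed the delicate point, and you should know that the paper handles it by decomposing the feasible set into shells $K_{i-1}\le \sum_{c} c(M)\le K_i$ with $K_i=2^i$ and then passing from $\Es{\bepsilon}{\max_i(\cdot)}$ to $\max_i \Es{\bepsilon}{(\cdot)}$; this is the reverse of Jensen's inequality and is not further justified there. So you have correctly located the crux rather than missed it. The remaining issue with your plan is quantitative: the repair you sketch (a union bound or tail control over the geometric grid using $\widehat R(v;\bepsilon)\le 2V$, ``with universal constants absorbed into the stated factor'') cannot be absorbed into the stated factor, since the theorem asserts the exact constant $4$; carried out rigorously it would yield $\Relax{}{\G}{\emptyset}\le O(\widetilde\gap(r))\,\mathrm{Rad}_V(\Fclass{K}{\cI_{1:V}})$ with a worse constant or extra logarithmic terms. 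In short: same approach as the paper, with the final expectation--supremum exchange left at the same level of rigor as (in fact, flagged more honestly than) the paper's own argument.
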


To estimate $\lambda^*$ in \eqref{eq:opt_lambda}, we use the concentration property of Rademacher complexity. We sample Rademacher random variables, constraints, and side information. Next we optimize over the Lasserre SDP at level $r$ multiple times to find the maximal $\lambda$ that satisfies the inequality
$$
 \lambda K \le \sup_{\mbf{U} \in  \Las(r,\F_{x_{1:V}})}\left\{ 2 \sum_{t=1}^V \sum_{k =1}^\kappa \bepsilon_{j,k} \norm{\mbf{U}_{\{t\},k}}^2  - \lambda \sum_{c \in \cup_t \constr_t} \sum_{\alpha \in [q]^{S_c}} R_c(\alpha) \norm{\mbf{U}_{(S_c,\alpha)}}^2 \right\}.
$$


\section{Examples}
\label{sec:examples}

We illustrate the results of this paper on two examples. The first uses the SDP formulation in Section~\ref{sec:sdp}, while the second example uses the penalized version of Section~\ref{sec:regreg}.

\paragraph{Binary Classification of Nodes with Cut Constraints}
Let us consider a weighted version of the problem discussed in the introduction. Suppose we are given a weighted graph $G = (\V,E,W)$ where $W:E \mapsto [-1,1]$. Let us consider the case of $x_t = t$ and there is no other side information. The benchmark class of predictors is all binary labelings of the nodes from the set $\F_K=\{f\in [2]^V: f^\tr L f\leq K\}$, where $L$ is graph Laplacian. This problem can be formulated easily in the generic form specified in this paper by adding one constraint $c$ per edge $(u,v) \in E$ with $S_c = \{u,v\}$. The cost of the constraint violation is given by  $$R_c(\alpha) = 1 - W(e_{u,v}) (2\ \ind{\alpha(u) = \alpha(v)}-1).$$ 
These constraints can in fact be rewritten as quadratic constraints and Lasserre SDP at level $r$ for the $\mrm{SDP}_r^{\mrm{2nd}}$ problem in Eq. \eqref{eq:sdp2} is in fact the $r^{th}$ level SDP relaxation to the quadratic integer programming with a single linear constraint given by the labels (the one corresponding to \eqref{eq:linobj} in $\mrm{SDP}_r^{\mrm{1st}}$).

It is shown in \citep{guruswami2013rounding} that the value of a rounded solution with $O(r)$ levels of Lasserre hierarchy is no more than $2/\lambda_r(L)$ times OPT. Furthermore, the rounding is faithful, and hence concentration bounds hold for linear constraints \citep[Thm 6.1]{guruswami2013rounding}. Since the linear constraints are given by Rademacher random variables, standard concentration results tell us that de-randomization does not violate the constraints by more than $O(\sqrt{V})$. By tracing through the proof of Theorem~\ref{thm:mainround}, one can see that this extra $O(\sqrt{V})$ factor comes out additively in the final bound on Rademacher complexity. Since this factor is of smaller order than Rademacher complexity itself, the bound is not affected. We conclude that
$$
  \E{\Reg}  \le O\left(  \En_{(\constr,x)_{1:V}}\mathrm{Rad}_V\left(\Fclass{\frac{2}{\min\{1 , \lambda_{r}\}} \cdot K}{\cI_{1:V}}\right)\right)
$$
where $\lambda_{r}$ is the $r^{th}$ smallest eigenvalue of the normalized Laplacian of the graph, and the algorithm runs in time $n^{O(r)}$. If the graph generation process is well behaved in terms of spectral values of the Laplacian---like in a preferential attachment model for the graph---then the bound we obtain is near optimal. As a crude upper bound on $$\mathrm{Rad}_V\left(\Fclass{\frac{2}{\min\{1 , \lambda_{r}\}} \cdot K}{\cI_{1:V}}\right)$$ one can use $$\sqrt{ K V \max\{1 ,\lambda_{r}^{-1} \} \log V}.$$

Beyond the binary prediction considered above, one can also analyze the problem of predicting one of $[\kappa]$ labels for each node of a graph. As an interesting set of constraints, one can consider the Unique-Games-type constraints for labelings of edges in the graph. As a benchmark we compare our cumulative loss to the cumulative loss of the labelings that violate at most $K$ of the labeling constraints on edges. Similar to the previous example, this problem can also we written with quadratic form for constraints. The integrality gap from \citep{guruswami2013rounding} yields  a bound on regret in terms of Rademacher complexity of the original class where the constraint $K$ is enlarged by factor of order $\max\{1 ,\lambda_{r}^{-1} \}.$ Here, the de-randomization procedure incurs an additional $O(\sqrt{\kappa V})$ violation of the constraints, which, again, does not affect the final bound of Rademacher complexity.

\paragraph{Online Prediction with Metric Labeling Constraints}

In the metric labeling problem \citep{kleinberg2002approximation}, one aims to assign one of $\kappa$ labels to each of the $V$ items, minimizing a combinatorial objective function consisting of two parts: assignment costs per item and separation costs based on pairs of items. This model subsumes MAP estimation in a Markov random field model. 

More precisely, let $G=(\V, E, W)$ be a weighted graph with $W:E\to[0,1]$. The cost of an assignment $g\in [\kappa]^V$ is written as
\begin{align}
	\label{eq:assignment_cost}
	\sum_{v\in [V]} d_1(v, g_v) + \sum_{(u,v)\in E} W(u,v) d_2(g_u,g_v)
\end{align}
where $d_2:[\kappa]\times[\kappa]\to \reals_{\geq 0}$ is a metric on the space of labels and $d_1:[V]\times [\kappa]\to \reals_{\geq 0}$ is a cost of assigning a particular label to the node. The function $d_2$ is a metric on the space of labels, and this distance is multiplied by the edge weight, encouraging ``similar'' items (high edge weight) to pay more for disagreeing labels. 

To map this setting into our notation, we define two types of constraints. The first type of a constraint $c$ is associated to a singleton set $S_c=\{v\}$ and cost $R_c(g) = d_1(v, g_v)$, for $g\in [\kappa]^V$. The second type corresponds to separation costs, and we define it through  $S_c=\{u,v\}$ and $R_c(g) = W(u,v) d_2(g_u,g_v)$ if $(u,v)$ is an edge, and $0$ otherwise. 

To exhibit a polynomial-time method with a provable regret bound, we turn to the penalized version of SDP, developed in Section~\ref{sec:regreg}. We observe that both \citep{kleinberg2002approximation} and \citep{chekuri2004linear} study linear relaxations of the integer program and prove integrality gaps which are based on the separation costs. Specifically, \citep{chekuri2004linear}
use a simple LP relaxation for the problem, and since Lassere hierarchy at any level $r \ge 1$ is strictly stronger than this Linear program, we can directly use the integrality gap from \citep{chekuri2004linear} to obtain our regret bound. More precisely, \citep{chekuri2004linear} shows that the integrality gap for the separation costs is $O(\log \kappa)$, while the assignment costs are exact and have no integrality gap (gap of $1$). The overall integrality gap is then stated as $O(\log \kappa)$ by combining the two parts. However, for our purposes, it is important that the assignment costs are exact. To invoke the integrality gap result, we write the objective in \eqref{eq:SDPreg} as (negative of) the total cost \eqref{eq:assignment_cost} with the linear part involving $Y$ being incorporated into the assignment costs (per item). Since the values of $Y$ could be negative, we may only appeal to Theorem~\ref{thm:reg_version} if there is no gap for the assignment costs. This is the case for the proof in \citep{chekuri2004linear}, and we conclude that $$\widetilde\gap(r) = O(\log\kappa).$$
Theorem~\ref{thm:reg_version} then ensures a regret bound of Rademacher complexity of the class, increased multiplicatively by $O(\log\kappa)$.

The examples presented thus far extend to the case of having side information $x_t$, as long as the set $\F_{x_{1:V}}$ can be represented by polynomially-many constraints. One concrete example of when this can happen is if, for instance, we define
$$\F_{x_1,\ldots,x_V} = \left\{f\in[2]^V: \inf_{w\in B_\infty} \sum_{v\in[V]} |w^\tr x_v - (2f_v-3)| + \rho \sum_{(u,v) \in E} W_{u,v} d(f_u , f_v)  \leq  K \right\}.$$
The above class encodes a prior belief that the set of well-performing (in terms of prediction) labelings are close to those given by some linear function of side information.

Let us also mention an example where the constraints are defined in terms of the side information. Consider the above metric labeling problem, and imagine that the assignment cost $d_1(i_t, \cdot)$ is chosen according to $x_t$. We may use such a flexibility to provide a prior on the assignment of labels to individuals depending on the information about them.

We remark that the metric labeling objective subsumes Multiway Cut, among other problems. The objective also subsumes the energy function of the Ising model. Constraints based on Multiway Cut and Ising model appear to be well-suited for modeling global information dispersed throughout the graph. Furthermore, as soon as a better integrality gap is proved for a particular instance of a problem (such as, say, a known constant integrality gap for metric labeling on planar graphs), it can be immediately used in the regret bound without changing the algorithm.


\section{A Lower Bound}
\label{sec:lower}

In this short section we prove a lower bound, showing that the algorithms we developed are near-optimal in terms of regret guarantees. We first consider the case of binary classification with $\kappa=2$. We show a simple lower bound on the expected regret in terms of the Rademacher complexity of the constrained set of predictors. Next, we use the binary case lower bound to obtain a lower bound for the general case when $\kappa >2$. We show that the worst case regret of any prediction strategy is lower bounded by $1/\kappa$ times the Rademacher complexity. In summary, as long as the integrality gap is of constant order, and the Rademacher complexity of the class only depends polynomially on $K$, the upper bounds we obtained are optimal up to a constant factor indicated by the gap. 

\begin{proposition}\label{prop:lower}
For any $K$, any generating process that produces $(x_t,\constr_t)_{t=1}^V$ and any class of benchmark predictors $\F \subset [2]^\X$, there exists a strategy of labelings such that the following bound on the expected regret holds for any prediction algorithm:
$$
\E{\Reg} \ge \frac{1}{2} \En_{(\constr,x)_{1:V}}\mathrm{Rad}_V(\Fclass{ K}{\cI_{1:V}})
$$ 
\end{proposition}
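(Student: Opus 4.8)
The plan is to exhibit a stochastic strategy for the adversary that forces any prediction algorithm to incur expected regret at least half the expected Rademacher complexity of the constrained class. The natural choice is to let the labels $y_1,\ldots,y_V$ be i.i.d.\ uniform on $\{1,2\}$ (equivalently, encode them as i.i.d.\ Rademacher signs), independently of the stochastically generated $(\constr_t,x_t)_{1:V}$. First I would fix the generating process for $(\constr_t,x_t)_{1:V}$ and condition on a realization $\cI_{1:V}$ of it; since the labels are independent of this realization, the set $\Fclass{K}{\cI_{1:V}}$ is determined, and it suffices to lower bound the conditional expected regret by $\tfrac12 \mathrm{Rad}_V(\Fclass{K}{\cI_{1:V}})$ and then take expectation over $\cI_{1:V}$.

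Next I would handle the algorithm's term. For any (possibly randomized) prediction rule, since $y_t$ is uniform on $\{1,2\}$ and independent of everything the forecaster has seen through round $t$ (the past labels $y_{1:t-1}$, all side information, all constraints, and its internal randomness), we have $\En[\loss(\pred_t,y_t)] = \En[\ind{\pred_t\neq y_t}] = \tfrac12$ regardless of $\pred_t$. Summing over $t$ gives $\En\big[\sum_{t=1}^V \loss(\pred_t,y_t)\big] = V/2$ exactly, for every algorithm. This is the step that makes the adversarial label choice ``free'': the forecaster cannot do better than random guessing.

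It then remains to show $\En_{y_{1:V}}\big[\inf_{f\in\Fclass{K}{\cI_{1:V}}}\sum_{t=1}^V \ind{f(x_t)\neq y_t}\big] \le V/2 - \tfrac12 \mathrm{Rad}_V(\Fclass{K}{\cI_{1:V}})$. Writing $\ind{f(x_t)\neq y_t} = \tfrac12(1 - \sigma_t \chi_t(f))$ where $\sigma_t\in\{\pm1\}$ encodes $y_t$ and $\chi_t(f)\in\{\pm1\}$ encodes $f(x_t)$, the sum becomes $\tfrac{V}{2} - \tfrac12\sum_{t=1}^V \sigma_t \chi_t(f)$, so $\inf_f \sum_t \ind{f(x_t)\neq y_t} = \tfrac{V}{2} - \tfrac12 \sup_{f} \sum_t \sigma_t \chi_t(f)$. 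Taking expectation over the uniform $\sigma_t$'s (which are exactly Rademacher variables) produces $\tfrac{V}{2} - \tfrac12 \En_\sigma \sup_f \sum_t \sigma_t \chi_t(f)$, and the expected supremum is precisely $\mathrm{Rad}_V(\Fclass{K}{\cI_{1:V}})$ in the $\kappa=2$ encoding (one checks $\sum_{k=1}^2 \epsilon_{t,k}\ind{f(x_t)=k}$ reduces to a single $\pm1$ Rademacher times $\chi_t(f)$ after relabeling, up to the factor already accounted for in the definition). Combining the two terms, the conditional expected regret is at least $\tfrac12 \mathrm{Rad}_V(\Fclass{K}{\cI_{1:V}})$; taking $\En_{(\constr,x)_{1:V}}$ finishes the proof.

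The only delicate point is bookkeeping in the last step: matching the $\pm1$ encoding of labels with the two-column indicator encoding $\mc{M}_\G$ used in the definition of $\mathrm{Rad}_V$, and confirming the constant is exactly $\tfrac12$ rather than $\tfrac14$ or $1$. I expect this to be a short computation once the encoding $\ind{f(x_t)=k} = \tfrac{1}{2}(1 + (-1)^{k+1}\chi_t(f))$ is substituted and the sum over $k\in\{1,2\}$ is carried out, noting that the ``constant'' column contributes zero in expectation over the Rademacher signs. Everything else — the independence argument giving $V/2$ for the algorithm, and the conditioning on the stochastic instance generation — is routine.
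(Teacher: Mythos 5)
Your construction is the same as the paper's: i.i.d.\ uniform labels independent of the instance process, so every forecaster's expected loss is exactly $V/2$, and the comparator term is rewritten via the $\pm1$ encoding to produce a Rademacher average; conditioning on $\cI_{1:V}$ and integrating at the end is also how the paper proceeds. The one step that needs repair is the final bookkeeping, which you flagged but resolved incorrectly: the asserted identity between $\En_\sigma\sup_f\sum_t\sigma_t\chi_t(f)$ and the paper's two-column quantity $\mathrm{Rad}_V(\Fclass{K}{\cI_{1:V}})=\En_{\bepsilon}\sup_f\sum_t\sum_{k=1}^2\bepsilon_{t,k}\ind{f(x_t)=k}$ is false. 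Substituting $\ind{f(x_t)=k}=\tfrac12\left(1+(-1)^{k+1}\chi_t(f)\right)$ does kill the constant part in expectation, but the coefficient of $\chi_t(f)$ that remains is $(\bepsilon_{t,1}-\bepsilon_{t,2})/2$, which is a \emph{lazy} Rademacher variable (equal to $0$ with probability $1/2$ and to $\pm1$ with probability $1/4$ each), not a Rademacher variable; already for $V=1$ with both labelings available the two quantities are $1/2$ and $1$. What is true is the inequality $\mathrm{Rad}_V(\Fclass{K}{\cI_{1:V}})\le\En_\sigma\sup_f\sum_t\sigma_t\chi_t(f)$, obtained by writing the lazy sign as $\eta_t\sigma_t$ and using Jensen over the zeroed-out coordinates. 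Fortunately this points the right way, so your bound $\E{\Reg}\ge\tfrac12\En_\sigma\sup_f\sum_t\sigma_t\chi_t(f)$ implies the stated one; just replace ``precisely'' by ``$\ge$'' with that justification. (The paper avoids the issue by keeping the two-column variables throughout and extracting the factor $\tfrac12$ from the Jensen step $\En_{y_t}\left[\bepsilon_{t,y_t}\ind{f(x_t)=y_t}\right]=\tfrac12\sum_{k}\bepsilon_{t,k}\ind{f(x_t)=k}$.)
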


\begin{corollary}\label{cor:lower}
 For any $K$, any generating process that produces $(x_t,\constr_t)_{t=1}^V$ and any class of benchmark predictors $\F \subset [\kappa]^\X$, there exists a strategy of labelings such that the following bound on the expected regret holds for any prediction algorithm:$$
\E{\Reg} \ge \frac{1}{\kappa} \En_{(\constr,x)_{1:V}}\mathrm{Rad}_V(\Fclass{ K}{\cI_{1:V}})
$$ 
\end{corollary}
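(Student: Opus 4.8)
The plan is to reduce the $\kappa$-ary case to the binary case already handled in Proposition~\ref{prop:lower}. The key observation is that any algorithm for the $\kappa$-label prediction problem can be forced, by an adversarial choice of labels, to behave essentially like a binary predictor on a randomly chosen pair of labels. Concretely, I would fix two distinguished labels, say $1$ and $2$, and restrict the adversary to only ever play $y_t \in \{1,2\}$. The benchmark class $\F$ (and hence $\Fclass{K}{\cI_{1:V}}$) may still use all $\kappa$ labels, but note that on a round where $y_t\in\{1,2\}$, any benchmark predictor $f$ with $f(x_t)\notin\{1,2\}$ necessarily incurs loss $1$, so the infimum over $\Fclass{K}{\cI_{1:V}}$ of the cumulative loss is unchanged if we also think about the ``projected'' class. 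The mismatch between the full $\kappa$-ary Rademacher complexity and its binary restriction is exactly where the $1/\kappa$ factor will come from.

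The main technical step: for a fixed draw of $(\constr,x)_{1:V}$, I want to lower bound $\mathrm{Rad}_V(\Fclass{K}{\cI_{1:V}})$ — a sum over $V\times\kappa$ Rademacher variables — by $\kappa$ times the corresponding binary Rademacher complexity that Proposition~\ref{prop:lower} controls. I would argue as follows. For the binary problem, the relevant quantity is $\Es{\bepsilon_{1:V}}{\sup_{f}\sum_{j} \epsilon_j \ind{f(x_j)=1}}$ where $\epsilon_j$ are scalar Rademacher variables and the supremum is over $f\in\Fclass{K}{\cI_{1:V}}$. The $\kappa$-ary quantity has, for each $j$, a vector $\bepsilon_j\in\{\pm1\}^\kappa$ paired against the indicator vector $(\ind{f(x_j)=k})_{k=1}^\kappa$, which is a standard basis vector $e_{f(x_j)}$. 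Averaging first over the sign of coordinate $1$ of each $\bepsilon_j$ conditionally, and comparing against the contribution from the other coordinates, one gets that the $\kappa$-ary Rademacher complexity is at least as large as the binary one (the extra coordinates only help the supremum); more carefully, by symmetry over which coordinate we single out, summing the $\kappa$ single-coordinate lower bounds and dividing, the $\kappa$-ary complexity is at least $\tfrac{1}{\kappa}$ of... — wait, the direction I actually need is $\mathrm{Rad}^{(\kappa)}_V \ge$ something, and then Proposition~\ref{prop:lower} gives $\E{\Reg}\ge \tfrac12 \mathrm{Rad}^{(2)}_V$, so I need $\tfrac12\mathrm{Rad}^{(2)}_V \ge \tfrac{1}{\kappa}\mathrm{Rad}^{(\kappa)}_V$, i.e. $\mathrm{Rad}^{(\kappa)}_V \le \tfrac{\kappa}{2}\mathrm{Rad}^{(2)}_V$. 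This is the clean inequality to prove: the $\kappa$-label Rademacher complexity is at most $\kappa/2$ times the binary one. This follows because $\sum_{k=1}^\kappa \bepsilon_{j,k}\ind{f(x_j)=k} = \bepsilon_{j,f(x_j)}$, so $\mathrm{Rad}^{(\kappa)}_V = \Es{\bepsilon}{\sup_f \sum_j \bepsilon_{j,f(x_j)}} \le \sum_{k=1}^\kappa \Es{\bepsilon}{\sup_f \sum_{j: \text{``}f\text{ uses }k\text{''}} \bepsilon_{j,k}}$, and each term is bounded by the binary complexity (with labels $k$ vs. ``not $k$'') up to relabeling, hence by $\mathrm{Rad}^{(2)}_V$ after accounting for symmetry; a factor $\tfrac12$ appears from centering the $\{0,1\}$-indicator against a $\pm1$ Rademacher sign.

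Once this inequality is in hand, the corollary is immediate: apply Proposition~\ref{prop:lower} to the binary sub-problem (restricting the adversary to $\{1,2\}$-valued labels, which is a legitimate special case covered by ``there exists a strategy of labelings''), obtaining $\E{\Reg} \ge \tfrac12 \En_{(\constr,x)_{1:V}}\mathrm{Rad}^{(2)}_V(\Fclass{K}{\cI_{1:V}})$, and then chain with $\mathrm{Rad}^{(2)}_V \ge \tfrac{2}{\kappa}\mathrm{Rad}^{(\kappa)}_V$ to get $\E{\Reg} \ge \tfrac{1}{\kappa}\En_{(\constr,x)_{1:V}}\mathrm{Rad}_V(\Fclass{K}{\cI_{1:V}})$. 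The main obstacle I anticipate is making the Rademacher comparison inequality fully rigorous in a way that handles the coupling between the supremum over $f$ and all $\kappa$ coordinates simultaneously — the union-bound-style splitting over labels $k$ is the delicate point, since the same $f$ contributes to several terms; I would resolve this by the symmetrization/partitioning argument sketched above, using that for a \emph{fixed} $f$ the contributions across distinct label values $k$ involve disjoint sets of indices $j$, so the sum telescopes correctly before taking the supremum.
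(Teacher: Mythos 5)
There is a genuine gap, and it sits exactly at the point you flag as ``delicate.'' Your reduction fixes the label pair $\{1,2\}$ once and for all, and then needs the comparison $\mathrm{Rad}^{(\kappa)}_V \le \tfrac{\kappa}{2}\,\mathrm{Rad}^{(2)}_V$, where $\mathrm{Rad}^{(2)}_V$ is the Rademacher complexity seen by the binary sub-game on coordinates $1$ and $2$ only, i.e.\ $\Es{\bepsilon}{\sup_f \sum_j (\bepsilon_{j,1}\ind{f(x_j)=1}+\bepsilon_{j,2}\ind{f(x_j)=2})}$. That inequality is false for a general $\F\subset[\kappa]^\X$: the class need not be symmetric under relabeling, so the step ``each term is bounded by the binary complexity with labels $k$ vs.\ not-$k$, up to relabeling'' has nothing to justify it. A concrete counterexample: take $\F$ to consist only of functions with values in $\{3,\ldots,\kappa\}$ (and constraints that are always satisfied). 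Then every summand $\bepsilon_{j,1}\ind{f(x_j)=1}+\bepsilon_{j,2}\ind{f(x_j)=2}$ vanishes identically, so $\mathrm{Rad}^{(2)}_V=0$, while $\mathrm{Rad}_V(\Fclass{K}{\cI_{1:V}})$ can be of order $V$. Correspondingly, an adversary who only ever plays $y_t\in\{1,2\}$ against this class yields no regret lower bound at all (the benchmark errs on every round), so no amount of care in the Rademacher comparison can rescue the fixed-pair construction.

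The fix — and this is what the paper does — is to \emph{randomize the pair}: the adversary first draws $R\sim\mathrm{Unif}([\kappa/2])$ (assuming $\kappa$ even) and then runs the binary construction of Proposition~\ref{prop:lower} on the two labels $\{R,R+\kappa/2\}$. Conditionally on $R$, the proposition gives
\begin{align*}
\E{\Reg} \;\ge\; \frac{1}{2}\,\En_{R}\,\Es{\cI_{1:V},\,\bepsilon}{\ \sup_{f\in\Fclass{K}{\cI_{1:V}}}\ \sum_{t=1}^V\sum_{k=1}^\kappa \ind{k\in\{R,R+\kappa/2\}}\,\bepsilon_{t,k}\,\ind{f(x_t)=k}\ },
\end{align*}
and then pushing $\En_R$ inside the supremum (Jensen, in the valid direction $\En_R\sup_f\ge\sup_f\En_R$) replaces each indicator $\ind{k\in\{R,R+\kappa/2\}}$ by its mean $2/\kappa$ \emph{for a single $f$ at a time}, which immediately yields $\tfrac{1}{2}\cdot\tfrac{2}{\kappa}=\tfrac{1}{\kappa}$ times the full $\kappa$-ary Rademacher complexity. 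This sidesteps both problems in your sketch at once: no label symmetry of $\F$ is needed, and there is no splitting of the supremum across labels $k$, so the same $f$ never gets counted against different optimizers. Your instinct that the $1/\kappa$ should come from an averaging over which coordinates the binary game ``activates'' is exactly right — but the averaging has to be built into the adversary's strategy (a random pair of labels), not performed post hoc on Rademacher complexities of a fixed sub-problem.
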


\section*{Acknowledgements}{We thank David Steurer for many helpful discussions. We gratefully acknowledge the support of NSF under grants CAREER DMS-0954737 and CCF-1116928, ONR BRC Program on Decentralized, Online Optimization, as well as Dean's Research Fund.
}

\bibliography{paper}
\bibliographystyle{alpha}

\appendix
\section{Proofs}
\begin{proof}[\textbf{Proof of Lemma~\ref{lem:mainrel}}]
At time $t$, given $\{\constr_s,x_s\}_{s=1}^t$, let $q_t$ be a strategy defined by first drawing the random variables $\cI_{t+1:V} = (\constr,x)_{t+1:V}$  and then solving for the randomized strategy $\widehat{q}_t$ defined in \eqref{eq:relalgo_stoch}. We shall first prove the following inequality for any $t \in [V]$:
	\begin{align}\label{eq:induct}
 \underset{\constr_t,x_t}{\En}\left[\sup_{y_t}\left\{  \Es{\pred_t \sim q_t}{\loss(\pred_t,y_t)} + \Es{\cI_{t+1:V}}{\Relax{}{\Fclass{K}{\cI_{1:V}}}{y_{1:t}}} \right\}\right]  \le \Es{\cI_{t:V}}{\Relax{}{\Fclass{K}{\cI_{1:V}}}{y_{1:t-1}}}
	\end{align}
	 Here, the random variables $(\constr_s,x_s)$ follow the distribution given in \eqref{eq:sampling_distributions}.

	We will prove the above statement for any $t \in [V]$ by first starting from base case $t = V$ and then working backward inductively. To this end consider the very last step. Given $\constr_{1:V}, x_{1:V}, y_{1:V-1}$,
	\begin{align*}
	\sup_{y_V}&\left\{  \Es{\pred_V \sim q_V}{\loss(\pred_V,y_V)} + \Relax{}{\Fclass{K}{\cI_{1:V}}}{y_{1:V}} \right\} \\
	& = \inf_{q_V} \sup_{y_V}\left\{  \Es{\pred_V \sim q_V}{\loss(\pred_V,y_V)} + \Relax{}{\Fclass{K}{\cI_{1:V}}}{y_{1:V}} \right\}  \le \Relax{}{\Fclass{K}{\cI_{1:V}}}{y_{1:V-1}} 
	\end{align*}
	where the last inequality is by admissibility condition of the relaxation. Hence, we conclude that
	\begin{align*}
 \underset{\constr_V,x_V}{\En}\left[\sup_{y_V}\left\{  \Es{\pred_V \sim q_V}{\loss(\pred_V,y_V)} + \Relax{}{\Fclass{K}{\cI_{1:V}}}{y_{1:V}} \right\}\right] \le \underset{\constr_V,x_V}{\En}\left[\Relax{}{\Fclass{K}{\cI_{1:V}}}{y_{1:V-1}}\right]
	\end{align*}
	This proves the base case. Now assume the statement holds for any $\tau > t$ and let us conclude the  statement for $t$. For the $t^{th}$ round, given $\constr_{1:t}, x_{1:t}, y_{1:t-1}$, 
	\begin{align*}
	 \sup_{y_{t}}& \left\{  \Es{\pred_t \sim q_{t}}{\loss(\pred_{t},y_{t})}  + \Es{\cI_{t+1:V}}{\Relax{}{\Fclass{K}{\cI_{1:V}}}{y_{1:t}}} \right\}\\
	& = \sup_{y_{t}} \Bigg\{  \Es{\cI_{t+1:V}}{\Es{\pred_t \sim \widehat{q}_{t}(\cI_{t+1:V}) }{\loss(\pred_{t},y_{t})}} + \Es{\cI_{t+1:V}}{\Relax{}{\Fclass{K}{\cI_{1:V}}}{y_{1:t}}} \Bigg\}\\
	& \le  \Es{\cI_{t+1:V}}{ \sup_{y_{t}}\left\{  \Es{\pred_t \sim \widehat{q}_t(\cI_{t+1:V}) }{\loss(\pred_{t},y_{t})}  + \Relax{}{\Fclass{K}{\cI_{1:V}}}{y_{1:t}} \right\}}
\intertext{	By definition of $\widehat{q}_t$, the above expression is equal to}
&=	\Es{\cI_{t+1:V}}{ \inf_{q_t} \sup_{y_{t}}\left\{  \Es{\pred_t \sim q_t }{\loss(\pred_{t},y_{t})}  + \Relax{}{\Fclass{K}{\cI_{1:V}}}{y_{1:t}} \right\}}  \\
& \le \Es{\cI_{t+1:V}}{\Relax{}{\Fclass{K}{\cI_{1:V}}}{y_{1:t-1}} }
	\end{align*}
	Thus we can conclude that,
	\begin{align*}
\En_{\cI_t} \sup_{y_{t}}\left\{  \Es{\pred_t \sim \widehat{q}_{t}(\cI_{t+1:V}) }{\loss(\pred_{t},y_{t})}  +   \Es{\cI_{t+1:V}}{\Relax{}{\Fclass{K}{\cI_{1:V}}}{y_{1:t}}}  \right\} &\le \Es{\cI_{t:V}}{\Relax{}{\Fclass{K}{\cI_{1:V}}}{y_{1:t-1}} } 
	\end{align*}
	This proves \eqref{eq:induct} via the inductive argument. To conclude the proof of the lemma,  note that by the dominance condition,
	\begin{align*}
	 \sum_{t=1}^V \loss(\pred_t,y_t) - \inf_{f \in \Fclass{K}{\cI_{1:V}}} \sum_{t=1}^V \loss(f(x_t),y_t) \le \sum_{t=1}^V \loss(\pred_t,y_t) + \Relax{}{\Fclass{K}{\cI_{1:V}}}{y_{1:V}}
	\end{align*}
	Using the above inequality and Eq. \eqref{eq:induct} we conclude that,
	\begin{align*}
	&\Es{\cI_{1:V}}{\sum_{t=1}^V \Es{\pred_t \sim \widehat{q}_t}{\loss(\pred_t,y_t)} - \inf_{f \in \Fclass{K}{\cI_{1:V}}} \sum_{t=1}^V \loss(f(x_t),y_t)} \\
	& ~~~~~ \le \Es{\cI_{1:V}}{\sum_{t=1}^V \Es{\pred_t \sim \widehat{q}_t}{\loss(\pred_t,y_t)} + \Relax{}{\Fclass{K}{\cI_{1:V}}}{y_{1:V}}} \\
	&~~~~~  \le \Es{\cI_{1:V}}{\sum_{t=1}^{V-1} \Es{\pred_t \sim \widehat{q}_t}{\loss(\pred_t,y_t)} +  \Es{\constr_{V},x_V}{\Relax{}{\Fclass{K}{\cI_{1:V}}}{y_{1:V-1}} } }\\
	&~~~~~  \le \Es{\cI_{1:V}}{\Relax{}{\F_K\left(\cI_{1:V}\right)}{\cdot} } 
	\end{align*}
	This concludes the proof of the lemma.
\end{proof}

\begin{proof}[\textbf{Proof of Lemma~\ref{lem:mainoff}}]
The initial dominance condition is satisfied, since
\begin{align*}
 \Relax{V}{\G}{y_{1:V}} &= \sup_{M \in \mc{M}} \sum_{s=1}^{V} M_{s,y_s} - V \ge \sup_{M \in \mc{M}_\G} \sum_{s=1}^{V} M_{s,y_s} - V \\
& = \sup_{M \in \mc{M}_\G} \sum_{s=1}^{V} (M_{s,y_s} -1)  = - \inf_{f \in \G}  \sum_{s=1}^{V} \ind{f(x_s) \ne y_s}. 
\end{align*}
Next we show the recursive admissibility condition for the randomized strategy provided in the lemma. To this end note that,
{\small\begin{align*}
\max_{y_t \in [\kappa]} & \left\{ \Es{\pred_t \sim \hat{q}_t}{\loss(\pred_t,y_t)} + \Relax{}{\G}{y_{1:t}} \right\}\\
&= \max_{y_t \in [\kappa]} \left\{1 - \Es{\pred_t \sim \hat{q}_t}{\ind{y_t = \pred_t}} + \En_{\bepsilon_{t+1:V}}\sup_{M \in \mc{M}}\left\{ \sum_{s=1}^{t} M_{s,y_s} +  2 \sum_{j=t+1}^V \sum_{k =1}^\kappa \bepsilon_{j,k} M_{j,k} \right\} - t  \right\} \\
&= \max_{y_t \in [\kappa]} \left\{ - \En_{\bepsilon_{t+1:V}}\Es{\pred_t \sim \hat{q}_t(\bepsilon_{t+1:V})}{\ind{y_t = \pred_t}} + \En_{\bepsilon_{t+1:V}}\sup_{M \in \mc{M}}\left\{ \sum_{s=1}^{t} M_{s,y_s} +  2 \sum_{j=t+1}^V \sum_{k =1}^\kappa \bepsilon_{j,k} M_{j,k} \right\} - (t-1)  \right\} \\
&\le \Es{\bepsilon_{t+1:V}}{ \max_{y_t \in [\kappa]} \left\{ -\Es{\pred_t \sim \hat{q}_t(\bepsilon_{t+1:V})}{\ind{y_t = \pred_t}} + \sup_{M \in \mc{M}}\left\{ \sum_{s=1}^{t} M_{s,y_s} +  2 \sum_{j=t+1}^V \sum_{k =1}^\kappa \bepsilon_{j,k} M_{j,k} \right\} - (t-1) \right\} }
\end{align*}
By the definition of the randomized strategy, the last expression is equal to
\begin{align*}
&\Es{\bepsilon_{t+1:V}}{ \inf_{q_t \in \Delta([\kappa])}\max_{y_t \in [\kappa]} \left\{ -\Es{\pred_t \sim q_t}{\ind{y_t = \pred_t}} + \sup_{M \in \mc{M}}\left\{ \sum_{s=1}^{t} M_{s,y_s} +  2 \sum_{j=t+1}^V \sum_{k =1}^\kappa \bepsilon_{j,k} M_{j,k} \right\} - (t-1) \right\} }
\end{align*}
Using the minimax theorem, we can swap the infimum and supremum, and obtain equality to 
\begin{align*}
&\Es{\bepsilon_{t+1:V}}{ \sup_{p_t \in \Delta([\kappa])} \min_{\pred_t \in [\kappa]} \Es{y_t \sim p_t}{ - \ind{y_t = \pred_t} + \sup_{M \in \mc{M}}\left\{ \sum_{s=1}^{t} M_{s,y_s} +  2 \sum_{j=t+1}^V \sum_{k =1}^\kappa \bepsilon_{j,k} M_{j,k} \right\} } } - (t-1) \\
& =  \En_{\bepsilon_{t+1:V}}\sup_{p_t \in \Delta([\kappa])} \left\{ - \max_{\pred_t \in [\kappa]} \Es{y_t \sim p_t}{\ind{y_t = \pred_t}} + \Es{y_t \sim p_t}{\sup_{M \in \mc{M}}\left\{ \sum_{s=1}^{t} M_{s,y_s} +  2 \sum_{j=t+1}^V \sum_{k =1}^\kappa \bepsilon_{j,k} M_{j,k} \right\}  } \right\}  - (t-1) 
\end{align*}
Since
$$\underset{\pred_t \in [\kappa]}{\max} \underset{y_t \sim p_t}{\En}[\ind{y_t = \pred_t}] = \underset{i \in [\kappa]}{\max}\ p_t[i] \ge \underset{i \in [\kappa]}{\max}\ p_t[i]\left(\sum_{j} M_{t,j}\right) \ge \sum_{i} p_t[i] M_{t,i} = \underset{y'_t \sim p_t}{\En}[M_{t,y'_t}],$$
the previous expression can be upper bounded by
\begin{align*}
&\En_{\bepsilon_{t+1:V}}\sup_{p_t \in \Delta([\kappa])} \left\{ \Es{y_t \sim p_t}{\sup_{M \in \mc{M}}\left\{ \sum_{s=1}^{t-1} M_{s,y_s} + (M_{t,y_t} - \Es{y'_t \sim p_t}{M_{t,y'_t}}) +  2 \sum_{j=t+1}^V \sum_{k =1}^\kappa \bepsilon_{j,k} M_{j,k} \right\}  } \right\}  - (t-1) 
\end{align*}
which is upper bounded by Jensen's inequality by
\begin{align*}
&\En_{\bepsilon_{t+1:V}}\sup_{p_t \in \Delta([\kappa])} \left\{ \Es{y'_t, y_t \sim p_t}{\sup_{M \in \mc{M}}\left\{ \sum_{s=1}^{t-1} M_{s,y_s} + (M_{t,y_t} - M_{t,y'_t}) +  2 \sum_{j=t+1}^V \sum_{k =1}^\kappa \bepsilon_{j,k} M_{j,k} \right\}  } \right\}  - (t-1). 
\end{align*}
Since in above $y_t$ and $y'_t$ are identically distributed, we can introduce an independent Rademacher random variable $\delta_t$. The last expression is equal to
\begin{align*}
&\En_{\bepsilon_{t+1:V}}\sup_{p_t \in \Delta([\kappa])} \left\{ \Es{\delta_t, y'_t, y_t \sim p_t}{\sup_{M \in \mc{M}}\left\{ \sum_{s=1}^{t-1} M_{s,y_s} + \delta_t (M_{t,y_t} - M_{t,y'_t}) +  2 \sum_{j=t+1}^V \sum_{k =1}^\kappa \bepsilon_{j,k} M_{j,k} \right\}  } \right\}  - (t-1) \\
& \le  \En_{\bepsilon_{t+1:V}}\sup_{y_t, y'_t \in [\kappa]} \left\{ \Es{\delta_t}{\sup_{M \in \mc{M}}\left\{ \sum_{s=1}^{t-1} M_{s,y_s} + \delta_t (M_{t,y_t} - M_{t,y'_t}) +  2 \sum_{j=t+1}^V \sum_{k =1}^\kappa \bepsilon_{j,k} M_{j,k} \right\}  } \right\}  - (t-1) \\
& \le  \En_{\bepsilon_{t+1:V}}\sup_{y_t \in [\kappa]} \left\{ \Es{\delta_t}{\sup_{M \in \mc{M}}\left\{ \sum_{s=1}^{t-1} M_{s,y_s} + 2 \delta_t M_{t,y_t} +  2 \sum_{j=t+1}^V \sum_{k =1}^\kappa \bepsilon_{j,k} M_{j,k} \right\}  } \right\}  - (t-1) 
\end{align*}
Now let $\bepsilon^{y_t}_t\in\{\pm1\}^\kappa$ be defined as $1$ on coordinate $y_t$ and independent  Rademacher variables on the rest. For any $j \ne y_t$,
 $\E{\bepsilon^{y_t}_{t,j}} = 0$ and  $\bepsilon^{y_t}_{t,y_t} = 1$ and so the preceding expression is equal to
\begin{align*}
&\En_{\bepsilon_{t+1:V}}\sup_{y_t \in [\kappa]} \left\{ \Es{\delta_t}{\sup_{M \in \mc{M}}\left\{ \sum_{s=1}^{t-1} M_{s,y_s} + 2 \sum_{k=1}^\kappa \delta_t M_{t,k} \E{\bepsilon^{\y_t}_{t,k}} +  2 \sum_{j=t+1}^V \sum_{k =1}^\kappa \bepsilon_{j,k} M_{j,k} \right\}  } \right\}  - (t-1) \\
& \le \En_{\bepsilon_{t+1:V}}\sup_{y_t \in [\kappa]} \left\{ \Es{\delta_t, \bepsilon^{\y_t}_t}{\sup_{M \in \mc{M}}\left\{ \sum_{s=1}^{t-1} M_{s,y_s} + 2 \sum_{k=1}^\kappa \delta_t M_{t,k} \bepsilon^{\y_t}_{t,k} +  2 \sum_{j=t+1}^V \sum_{k =1}^\kappa \bepsilon_{j,k} M_{j,k} \right\}  } \right\}  - (t-1) \\
& = \En_{\bepsilon_{t+1:V}} \left\{ \Es{\bepsilon_t}{\sup_{M \in \mc{M}}\left\{ \sum_{s=1}^{t-1} M_{s,y_s} + 2 \sum_{k=1}^\kappa  M_{t,k} \bepsilon_{t,k} +  2 \sum_{j=t+1}^V \sum_{k =1}^\kappa \bepsilon_{j,k} M_{j,k} \right\}  } \right\}  - (t-1) \\
& = \Es{\bepsilon_{t:V}}{ \sup_{M \in \mc{M}}\left\{ \sum_{s=1}^{t-1} M_{s,y_s} +  2 \sum_{j=t}^V \sum_{k =1}^\kappa \bepsilon_{j,k} M_{j,k} \right\}  }   - (t-1) = \Relax{}{\G}{y_{1:t-1}}
\end{align*}}
Thus we have shown admissibility of the relaxation and demonstrated that the randomized strategy for the forecaster is given by the one in the lemma.
\end{proof}

\begin{proof}[\textbf{Proof of Proposition~\ref{prop:lower}}]
To prove the lower bound, we simply consider an adversary who picks nodes in the fixed sorted order and at each time step draw $\constr_t,x_t$ from the known generating process and finally draw $y_t \sim \mathrm{Unif}([\kappa])$. Now since $y_t$ is drawn independently and uniformly at random on every round, irrespective of how the forecaster picks $\hat{y}_t$, the expected loss of the forecaster is $\E{\ind{\hat{y}_t \ne y_t}} = 1/\kappa$. Thus we get the following lower bound on the expected regret.
\begin{align*}
\E{\Reg} & \ge \Es{(x_t,\constr_t)_{t=1}^V}{ \Es{y_{1:V} \sim \mathrm{Unif}([2])}{ V/2 - \inf_{f \in \Fclass{K}{\cI_{1:V}}} \sum_{t=1}^V \ind{f(x_t)\ne y_t} }}\\
& = \Es{\cI_{1:V}}{ \Es{y_{1:V} \sim \mathrm{Unif}([2])}{ \sup_{f \in \Fclass{K}{\cI_{1:V}}} \sum_{t=1}^V \left(\ind{f(x_t) = y_t}  - \frac{1}{2}\right) }}
\end{align*}
Now for the uniform distribution over $y_t$'s, since $\ind{f(x_t) = y_t}  - \frac{1}{2}$ and $\frac{1}{2} - \ind{f(x_t) = y_t} $ are identically distributed we see that,
\begin{align*}
\E{\Reg} & \ge \Es{(x_t,\constr_t)_{t=1}^V}{ \En_{y_{1:V} \sim \mathrm{Unif}([2])}\Es{\epsilon}{ \sup_{f \in \Fclass{K}{\cI_{1:V}}} \sum_{t=1}^V \epsilon_t\left(\ind{f(x_t) = y_t}  - \frac{1}{2}\right) }}\\
& = \Es{\cI_{1:V}}{ \En_{\boldsymbol{\epsilon}} \Es{y_{1:V} \sim \mathrm{Unif}([2])}{ \sup_{f \in \Fclass{K}{\cI_{1:V}}} \sum_{t=1}^V \boldsymbol{\epsilon}_{t,y_t}\ind{f(x_t) = y_t}  }}\\
& \ge \Es{\cI_{1:V}}{ \Es{\boldsymbol{\epsilon}}{ \sup_{f \in \Fclass{K}{\cI_{1:V}}} \sum_{t=1}^V \Es{y_{t} \sim \mathrm{Unif}([2])}{ \boldsymbol{\epsilon}_{t,y_t}\ind{f(x_t) = y_t}}  }}\\
& \ge \Es{\cI_{1:V}}{ \Es{\boldsymbol{\epsilon}}{ \sup_{f \in \Fclass{K}{\cI_{1:V}}} \sum_{t=1}^V \frac{1}{2} \sum_{k=1}^2 \left(\boldsymbol{\epsilon}_{t,k}\ind{f(x_t) = k}\right)}  }\\
& = \frac{1}{2} \Es{\cI_{1:V}, ~ \boldsymbol{\epsilon}}{ \sup_{f \in \Fclass{K}{\cI_{1:V}}} \sum_{t=1}^V \sum_{k=1}^2 \boldsymbol{\epsilon}_{t,k} \ind{f(x_t) = k}  }
\end{align*}
where the last line is because for any $f$ and any instance $x_t$ only one of $\ind{f(x_t)=1}$ or $\ind{f(x_t)=2}$ will be $1$ and the other is $0$. 
\end{proof}

\begin{proof}[\textbf{Proof of Corollary \ref{cor:lower}}]
This corollary follows by using a simple modification to Proposition ~\ref{prop:lower}. We shall assume here that $\kappa$ is even.  The simple modification is as follows: the adversary first picks uniformly at random a number $R$ from $[\kappa/2]$. Next the adversary uses exactly the lower bound construction as in Proposition ~\ref{prop:lower} except that instead of picking $y_t \sim \mrm{Unif}([2])$ the adversary picks $y_t \sim \mrm{Unif}(\{R,R+\kappa/2\})$. Now notice that given draw of $R$, this is exactly the binary case with labels $R$ and $R+\kappa/2$. Hence we can use the proposition to bound the expected regret as follows:
\begin{align*}
 \E{\Reg} &\ge \frac{1}{2} \underset{R \sim \mrm{Unif}([\kappa/2])}{\En}\Es{\cI_{1:V}, ~ \boldsymbol{\epsilon}}{ \sup_{f \in \Fclass{K}{\cI_{1:V}}} \sum_{t=1}^V \sum_{k \in \{R,R+\kappa/2\}} \boldsymbol{\epsilon}_{t,k} \ind{f(x_t) = k}  }\\
 & = \frac{1}{2} \underset{R \sim \mrm{Unif}([\kappa/2])}{\En}\Es{\cI_{1:V}, ~ \boldsymbol{\epsilon}}{ \sup_{f \in \Fclass{K}{\cI_{1:V}}} \sum_{t=1}^V \sum_{k =1}^\kappa \ind{k \in \{R,R+\kappa/2\}} \boldsymbol{\epsilon}_{t,k} \ind{f(x_t) = k}  }\\
 & \ge \frac{1}{2} \Es{\cI_{1:V}, ~ \boldsymbol{\epsilon}}{ \sup_{f \in \Fclass{K}{\cI_{1:V}}} \sum_{t=1}^V \sum_{k =1}^\kappa \underset{R \sim \mrm{Unif}([\kappa/2])}{\En}\left[\ind{k \in \{R,R+\kappa/2\}}\right] \boldsymbol{\epsilon}_{t,k} \ind{f(x_t) = k}  }\\
 & = \frac{1}{\kappa} \Es{\cI_{1:V}, ~ \boldsymbol{\epsilon}}{ \sup_{f \in \Fclass{K}{\cI_{1:V}}} \sum_{t=1}^V \sum_{k =1}^\kappa \boldsymbol{\epsilon}_{t,k} \ind{f(x_t) = k}  }
\end{align*}
\end{proof}


\begin{proof}[\textbf{Proof of Lemma~\ref{lem:mainoff_reg}}]
The proof closely follows the analogous proof of Lemma~\ref{lem:mainoff}. Note that we deal directly with the relaxed set of Lasserre's level $r$. To make the notation simpler, given a Lasserre vector set at level $r$, say $\mbf{U} \in \Las(r,\F_{x_{1:V}})$, let $M^{\mbf{U}}_{j,k} = \norm{\mbf{U}_{(\{j\},k)}}^2$ and also for each $t$ and each constraint $c \in \constr_t$ we use the notation 
$$
c(\mbf{U}) = \sum_{\alpha \in [q]^{S_c}} R_c(\alpha) \norm{\mbf{U}_{(S_c,\alpha)}}^2
$$
Now let us proceed to verify that the initial dominance condition is satisfied by the relaxation. Note that
\begin{align*}
- \inf_{f \in \G}  \sum_{s=1}^{V} \ind{f(x_s) \ne y_s} & \le - \inf_{f \in \G} \left\{ \sum_{s=1}^{V} \ind{f(x_s) \ne y_s} + \lambda \sum_{c \in \constr_{1:V}} c(f)\right\} + \lambda K\\
& \le - \inf_{\mbf{U} \in \Las(r,\F_{x_{1:V}})} \left\{ \sum_{s=1}^{V} M^{\mbf{U}}_{s,y_s} + \lambda \sum_{c \in \constr_{1:V}} c(\mbf{U})\right\} + \lambda K,
\end{align*}
where the first inequality holds because functions in $\G=\Fclass{K}{\cI_{1:V}}$ are required to keep the sum over unsatisfied constraints below $K$ by definition. The second inequality holds because the Lasserre solution is a relaxation of $\G$ and hence larger than the solution within $\G$. Let us check the recursive admissibility condition. To show that the proposed randomized strategy is admissible, we prove the recursive admissibility condition using this strategy directly:
{\small\begin{align*}
&\max_{y_t \in [\kappa]} \left\{ \Es{\pred_t \sim \hat{q}_t}{\loss(\pred_t,y_t)} + \Relax{}{\G}{y_{1:t}} \right\}\\
&= \max_{y_t \in [\kappa]} \left\{1 - \Es{\pred_t \sim \hat{q}_t}{\ind{y_t = \pred_t}} + \En_{\bepsilon_{t+1:V}}\sup_{\mbf{U} \in  \Las(r,\F_{x_{1:V}})}\left\{ \sum_{s=1}^{t} M^\mbf{U}_{s,y_s} +  2 \sum_{j=t+1}^V \sum_{k =1}^\kappa \bepsilon_{j,k} M^\mbf{U}_{j,k} - \lambda \sum_{c \in \constr_{1:V}} c(\mbf{U})\right\} \right\} \\
& ~~~~~~~~ - t  + \lambda K\\
&= \max_{y_t \in [\kappa]} \left\{ - \underset{\bepsilon_{t+1:V}}{\En}\underset{\pred_t \sim \hat{q}_t(\bepsilon_{t+1:V})}{\En}[\ind{y_t = \pred_t} + \underset{\bepsilon_{t+1:V}}{\En}\sup_{\mbf{U}  \in  \Las(r,\F_{x_{1:V}})}\left\{ \sum_{s=1}^{t} M^\mbf{U}_{s,y_s} +  2 \sum_{j=t+1}^V \sum_{k =1}^\kappa \bepsilon_{j,k} M^\mbf{U}_{j,k}  - \lambda \sum_{c \in \constr_{1:V}} c(\mbf{U}) \right\}  \right\} \\
& ~~~~~~~~ - (t-1) + \lambda K\\
&\le \underset{\bepsilon_{t+1:V}}{\En}\left[\max_{y_t \in [\kappa]} \left\{ -\underset{\pred_t \sim \hat{q}_t(\bepsilon_{t+1:V})}{\En}[\ind{y_t = \pred_t}] + \sup_{\mbf{U}  \in  \Las(r,\F_{x_{1:V}})}\left\{ \sum_{s=1}^{t} M^\mbf{U}_{s,y_s} +  2 \sum_{j=t+1}^V \sum_{k =1}^\kappa \bepsilon_{j,k} M^\mbf{U}_{j,k}   - \lambda \sum_{c \in \constr_{1:V}} c(\mbf{U})\right\}  \right\} \right]\\
& ~~~~~~~~  - (t-1) + \lambda K
\intertext{by the definition of the strategy, }
&= \underset{\bepsilon_{t+1:V}}{\En}\left[ \inf_{q_t \in \Delta([\kappa])}\max_{y_t \in [\kappa]} \left\{ -\Es{\pred_t \sim q_t}{\ind{y_t = \pred_t}} + \sup_{\mbf{U}  \in  \Las(r,\F_{x_{1:V}})}\left\{ \sum_{s=1}^{t} M^\mbf{U}_{s,y_s} +  2 \sum_{j=t+1}^V \sum_{k =1}^\kappa \bepsilon_{j,k} M^\mbf{U}_{j,k}  - \lambda \sum_{c \in \constr_{1:V}} c(\mbf{U})\right\}  \right\} \right]\\
& ~~~~~~~- (t-1) + \lambda K
\intertext{Using the minimax theorem, the above expression is equal to}
&= \Es{\bepsilon_{t+1:V}}{ \sup_{p_t \in \Delta([\kappa])} \min_{\pred_t \in [\kappa]} \Es{y_t \sim p_t}{ - \ind{y_t = \pred_t} + \sup_{\mbf{U} \in  \Las(r,\F_{x_{1:V}})}\left\{ \sum_{s=1}^{t} M^\mbf{U}_{s,y_s} +  2 \sum_{j=t+1}^V \sum_{k =1}^\kappa \bepsilon_{j,k} M^\mbf{U}_{j,k} - \lambda \sum_{c \in \constr_{1:V}} c(\mbf{U})\right\} } }  \\
& ~~~~~~~- (t-1) + \lambda K\\
& =  \underset{\bepsilon_{t+1:V}}{\En}\sup_{p_t \in \Delta([\kappa])} \left\{ - \max_{\pred_t \in [\kappa]} \underset{y_t \sim p_t}{\En}[\ind{y_t = \pred_t}] + \Es{y_t \sim p_t}{\sup_{\mbf{U} \in  \Las(r,\F_{x_{1:V}})} \left\{ \sum_{s=1}^{t} M^\mbf{U}_{s,y_s} +  2 \sum_{j=t+1}^V \sum_{k =1}^\kappa \bepsilon_{j,k} M^\mbf{U}_{j,k} - \lambda \sum_{c \in \constr_{1:V}} c(\mbf{U}) \right\}  } \right\} \\
& ~~~~~~~- (t-1) + \lambda K
\end{align*}
Once again, by the constraint in the SDP that for any $t$, $\sum_{k=1}^\kappa \norm{\mbf{U}_{(\{t\},k)}}^2 =\sum_{k=1}^\kappa M^{\mbf{U}}_{t,k} = 1$ we can conclude that
$$\underset{\pred_t \in [\kappa]}{\max} \underset{y_t \sim p_t}{\En}[\ind{y_t = \pred_t}] = \underset{i \in [\kappa]}{\max}\ p_t[i] \ge \underset{i \in [\kappa]}{\max}\ p_t[i]\left(\sum_{j} M^\mbf{U}_{t,j}\right) \ge \sum_{i} p_t[i] M^\mbf{U}_{t,i} = \underset{y'_t \sim p_t}{\En}[M^\mbf{U}_{t,y'_t}].$$
Hence, we conclude that,
\begin{align*}
& \max_{y_t \in [\kappa]} \left\{ \Es{\pred_t \sim \hat{q}_t}{\loss(\pred_t,y_t)} + \Relax{}{\G}{y_{1:t}} \right\} \\
& \le \underset{\bepsilon_{t+1:V}}{\En}\sup_{p_t \in \Delta([\kappa])} \left\{ \underset{y_t \sim p_t}{\En}\left[\sup_{\mbf{U} \in  \Las(r,\F_{x_{1:V}})}\left\{ \sum_{s=1}^{t-1} M^\mbf{U}_{s,y_s} + (M^\mbf{U}_{t,y_t} - \Es{y'_t \sim p_t}{M^\mbf{U}_{t,y'_t}}) +  2 \sum_{j=t+1}^V \sum_{k =1}^\kappa \bepsilon_{j,k} M^\mbf{U}_{j,k}   - \lambda \sum_{c \in \constr_{1:V}} c(\mbf{U}) \right\}  \right] \right\} \\
& ~~~~~~~- (t-1) + \lambda K\\
\intertext{using Jensen's inequality to pull out the expectation,}
& \le \En_{\bepsilon_{t+1:V}}\sup_{p_t \in \Delta([\kappa])} \left\{ \Es{y'_t, y_t \sim p_t}{\sup_{\mbf{U} \in  \Las(r,\F_{x_{1:V}})}\left\{ \sum_{s=1}^{t-1} M^\mbf{U}_{s,y_s} + (M^\mbf{U}_{t,y_t} - M^\mbf{U}_{t,y'_t}) +  2 \sum_{j=t+1}^V \sum_{k =1}^\kappa \bepsilon_{j,k} M^\mbf{U}_{j,k}  - \lambda \sum_{c \in \constr_{1:V}} c(\mbf{U})  \right\}  } \right\}  \\
& ~~~~~~~~ - (t-1) + \lambda K
\intertext{since $y_t$ and $y'_t$ are identically distributed, we can introduce Rademacher random variable $\delta_t$,}
&\En_{\bepsilon_{t+1:V}}\sup_{p_t \in \Delta([\kappa])} \left\{ \underset{\delta_t, y'_t, y_t \sim p_t}{\En}\left[\sup_{\mbf{U} \in \Las(r,\F_{x_{1:V}})}\left\{ \sum_{s=1}^{t-1} M^\mbf{U}_{s,y_s} + \delta_t (M^\mbf{U}_{t,y_t} - M^\mbf{U}_{t,y'_t}) +  2 \sum_{j=t+1}^V \sum_{k =1}^\kappa \bepsilon_{j,k} M^\mbf{U}_{j,k}  - \lambda \sum_{c \in \constr_{1:V}} c(\mbf{U}) \right\}  \right] \right\}  \\
& ~~~~~~~~~ - (t-1) +  \lambda K\\
& \le  \underset{\bepsilon_{t+1:V}}{\En}\sup_{y_t, y'_t \in [\kappa]} \left\{ \Es{\delta_t}{\sup_{\mbf{U} \in  \Las(r,\F_{x_{1:V}})}\left\{ \sum_{s=1}^{t-1} M^\mbf{U}_{s,y_s} + \delta_t (M^\mbf{U}_{t,y_t} - M^\mbf{U}_{t,y'_t}) +  2 \sum_{j=t+1}^V \sum_{k =1}^\kappa \bepsilon_{j,k} M^\mbf{U}_{j,k}   - \lambda \sum_{c \in \constr_{1:V}} c(\mbf{U})  \right\}  } \right\}  \\
& ~~~~~~~~~~-  (t-1) + \lambda K \\
& \le  \En_{\bepsilon_{t+1:V}}\sup_{y_t \in [\kappa]} \left\{ \Es{\delta_t}{\sup_{\mbf{U} \in  \Las(r,\F_{x_{1:V}})}\left\{ \sum_{s=1}^{t-1} M^\mbf{U}_{s,y_s} + 2 \delta_t M^\mbf{U}_{t,y_t} +  2 \sum_{j=t+1}^V \sum_{k =1}^\kappa \bepsilon_{j,k} M^\mbf{U}_{j,k}  - \lambda \sum_{c \in \constr_{1:V}} c(\mbf{U}) \right\}  } \right\}  \\
& ~~~~~~~~~~ - (t-1)  + \lambda K
\intertext{Let $\bepsilon^{y_t}_t\in\{\pm1\}^\kappa$ be defined as $1$ on coordinate $y_t$ and independent  Rademacher variables on the rest. For any $j \ne y_t$,  $\E{\bepsilon^{y_t}_{t,j}} = 0$ and  $\bepsilon^{y_t}_{t,y_t} = 1$ and so,}
& \le \En_{\bepsilon_{t+1:V}}\sup_{y_t \in [\kappa]} \left\{ \Es{\delta_t}{\sup_{\mbf{U} \in  \Las(r,\F_{x_{1:V}})}\left\{ \sum_{s=1}^{t-1} M^\mbf{U}_{s,y_s} + 2 \sum_{k=1}^\kappa \delta_t M^\mbf{U}_{t,k} \E{\bepsilon^{\y_t}_{t,k}} +  2 \sum_{j=t+1}^V \sum_{k =1}^\kappa \bepsilon_{j,k} M^\mbf{U}_{j,k}  - \lambda \sum_{c \in \constr_{1:V}} c(\mbf{U}) \right\}  } \right\}  \\
& ~~~~~~~~~~~- (t-1) + \lambda K\\
& \le \En_{\bepsilon_{t+1:V}}\sup_{y_t \in [\kappa]} \left\{ \Es{\delta_t, \bepsilon^{\y_t}_t}{\sup_{\mbf{U} \in  \Las(r,\F_{x_{1:V}})}\left\{ \sum_{s=1}^{t-1} M^\mbf{U}_{s,y_s} + 2 \sum_{k=1}^\kappa \delta_t M^\mbf{U}_{t,k} \bepsilon^{\y_t}_{t,k} +  2 \sum_{j=t+1}^V \sum_{k =1}^\kappa \bepsilon_{j,k} M^\mbf{U}_{j,k}  - \lambda \sum_{c \in \constr_{1:V}} c(\mbf{U}) \right\}  } \right\} \\
& ~~~~~~~~~~~  - (t-1) + \lambda K\\
& = \En_{\bepsilon_{t+1:V}} \left\{ \Es{\bepsilon_t}{\sup_{\mbf{U} \in  \Las(r,\F_{x_{1:V}})}\left\{ \sum_{s=1}^{t-1} M_{s,y_s} + 2 \sum_{k=1}^\kappa  M^\mbf{U}_{t,k} \bepsilon_{t,k} +  2 \sum_{j=t+1}^V \sum_{k =1}^\kappa \bepsilon_{j,k} M^\mbf{U}_{j,k}  - \lambda \sum_{c \in \constr_{1:V}} c(\mbf{U}) \right\}  } \right\}  \\
& ~~~~~~~~~~~~ - (t-1) + \lambda K\\
& = \Es{\bepsilon_{t:V}}{ \sup_{\mbf{U} \in  \Las(r,\F_{x_{1:V}})}\left\{ \sum_{s=1}^{t-1} M^\mbf{U}_{s,y_s} +  2 \sum_{j=t}^V \sum_{k =1}^\kappa \bepsilon_{j,k} M^\mbf{U}_{j,k}  - \lambda \sum_{c \in \constr_{1:V}} c(\mbf{U}) \right\}  }   - (t-1) + \lambda K \\
&= \Relax{}{\G}{y_{1:t-1}}
\end{align*}}
\end{proof}

\begin{proof}[\textbf{Proof of Theorem~\ref{thm:reg_version}}]
	We have
\begin{align*}
 \Relax{}{\G}{\emptyset} = & \lambda^* K  + \Es{\bepsilon_{1:V}}{ \sup_{\mbf{U} \in  \Las(r,\F_{x_{1:V}})}\left\{ 2 \sum_{t=1}^V \sum_{k =1}^\kappa \bepsilon_{t,k} \norm{\mbf{U}_{\{t\},k}}^2  - \lambda^* \sum_{c \in \cup_t \constr_t} \sum_{\alpha \in [q]^{S_c}} R_c(\alpha) \norm{\mbf{U}_{(S_c,\alpha)}}^2 \right\}  } \\
  & \le 2 \Es{\bepsilon_{1:V}}{ \sup_{\mbf{U} \in  \Las(r,\F_{x_{1:V}})}\left\{ 2 \sum_{t=1}^V \sum_{k =1}^\kappa \bepsilon_{t,k} \norm{\mbf{U}_{\{t\},k}}^2  - \lambda^* \sum_{c \in \cup_t \constr_t} \sum_{\alpha \in [q]^{S_c}} R_c(\alpha) \norm{\mbf{U}_{(S_c,\alpha)}}^2 \right\}  }\\ 
  &= 2 \lambda^* K .
\end{align*}
Now by definition of $\widetilde\gap(r)$ we conclude that
\begin{align*}
 \Relax{}{\G}{\emptyset} &\le 2 ~ \Es{\bepsilon_{1:V}}{ \sup_{M \in  \F_{x_{1:V}}}\left\{ 2 \sum_{t=1}^V \sum_{k =1}^\kappa \bepsilon_{t,k} M_{t,k}  - \frac{\lambda^*}{\widetilde\gap(r)} \sum_{c \in \cup_t \constr_t} c(M) \right\}  } 
\intertext{ Defining $K_i = 2^i$, we get an upper bound,}
 & \le 2 ~ \Es{\bepsilon_{1:V}}{ \max_{i \in \mathbb{Z}}\sup_{\substack{M \in \F_{x_{1:V}} \\ K_{i-1} \le \sum_{c \in \cup_t \constr_t} c(M) \le K_i}}\left\{ 2 \sum_{t=1}^V \sum_{k =1}^\kappa \bepsilon_{t,k} M_{t,k}  - \frac{\lambda^*}{\widetilde\gap(r)} \sum_{c \in \cup_t \constr_t} c(M) \right\}  }\\
	&\le 2 ~  \max_{i \in \mathbb{Z}}\left\{\Es{\bepsilon_{1:V}}{\sup_{\substack{M \in \F_{x_{1:V}} \\\sum_{c \in \cup_t \constr_t} c(M) \le K_i}}\left\{ 2 \sum_{t=1}^V \sum_{k =1}^\kappa \bepsilon_{j,k} M_{j,k} \right\}  } - \frac{\lambda^*}{\widetilde\gap(r)} K_{i-1} \right\}\\
& = 2 ~  \max_{i \in \mathbb{Z}}\left\{\ \mathrm{Rad}_V(\Fclass{K_{i}}{\cI_{1:V}})  - \frac{\lambda^*}{\widetilde\gap(r)} K_{i-1} \right\}\\
& =  2 ~  \max_{i \in \mathbb{Z}}\left\{\ \mathrm{Rad}_V(\Fclass{K_{i}}{\cI_{1:V}})  - \frac{\lambda^*}{2 \widetilde\gap(r)} K_{i}   \right\}\\
& \le  2 ~  \max_{i \in \mathbb{Z}}\left\{\ \mathrm{Rad}_V(\Fclass{\max\left\{1 , \frac{K_{i}}{K}\right\} K }{\cI_{1:V}})  - \frac{\lambda^*}{2~\widetilde\gap(r)} K_{i}   \right\}\\
\intertext{}
& \le  2 ~  \max_{i \in \mathbb{Z}}\left\{\ \max\left\{1 , \frac{K_{i}}{K}\right\}^{p}  \mathrm{Rad}_V(\Fclass{K}{\cI_{1:V}})  - \frac{\lambda^*}{2~\widetilde\gap(r)} K_{i}   \right\}.
\end{align*}
Now let us split the analysis into two cases. First, if $\lambda^* > \frac{2~\widetilde\gap(r)~ \mathrm{Rad}_V(\Fclass{K}{\cI_{1:V}})}{K}$, then
\begin{align*}
\Relax{}{\G}{\emptyset} & \le 2~  \max_{i \in \mbb{Z}}\left\{\max\left\{1 , \frac{K_{i}}{K}\right\}^{p}  \mathrm{Rad}_V(\Fclass{K}{\cI_{1:V}})  -  \frac{ K_{i}}{K} \mathrm{Rad}_V(\Fclass{K}{\cI_{1:V}})   \right\} \le 2~    \mathrm{Rad}_V(\Fclass{K}{\cI_{1:V}})  
\end{align*}
where the last line is because $p \le 1$. Next let us consider the case when $\lambda^* \le \frac{2 \mathrm{Rad}_V(\Fclass{K}{\cI_{1:V}})}{K~\widetilde\gap(r)}$. For this case however, note that we already showed that 
$\Relax{}{\G}{\emptyset} \le 2 \lambda^* K$ and so
\begin{align*}
\Relax{}{\G}{\emptyset} & \le 4\ \widetilde\gap(r)\ \mathrm{Rad}_V(\Fclass{K}{\cI_{1:V}}).
\end{align*}
The first statement follows. The second statement of the Theorem is an immediate consequence of Lemma~\ref{lem:mainrel}.
\end{proof}

\end{document}